\documentclass[letterpaper]{article} 
\usepackage{aaai2027}
\nocopyright 

\usepackage[hyphens]{url}  
\usepackage{graphicx} 
\usepackage{natbib}  
\usepackage{caption} 
\usepackage{algorithm}
\usepackage{algorithmic}

\usepackage{newfloat}
\usepackage{listings}
\DeclareCaptionStyle{ruled}{labelfont=normalfont,labelsep=colon,strut=off} 
\floatstyle{ruled}
\newfloat{listing}{tb}{lst}{}
\floatname{listing}{Listing}

\usepackage{booktabs}

\usepackage{amsmath, amssymb}

\usepackage{amsmath,amsfonts}

\def\eqref#1{equation~\ref{#1}}
\def\Eqref#1{Equation~\ref{#1}}

\def\1{\bm{1}}

\DeclareMathAlphabet{\mathsfit}{\encodingdefault}{\sfdefault}{m}{sl}
\SetMathAlphabet{\mathsfit}{bold}{\encodingdefault}{\sfdefault}{bx}{n}

\usepackage{tikz}
\usepackage{pgfplots}
\pgfplotsset{compat=1.18}

\usepackage{amsthm}

\usepackage{xcolor}
\usepackage{multirow}
\usepackage{cleveref}
\usepackage{booktabs, makecell}
\usepackage{subcaption}

\newtheorem{lemma}{Lemma}
\newtheorem{theorem}{Theorem}

\providecommand{\Eqref}[1]{\Cref{#1}}

\title{Score-Based Stabilization for Time-Dependent Problems}

\author{
    Eshed Gal, Eldad Haber, Uri Ascher
}
\affiliations{
    University of British Columbia\\
    Vancouver, British Columbia, Canada\\
    eshedg@cs.ubc.ca, ehaber@eoas.ubc.ca, ascher@cs.ubc.ca
}

\begin{document}

\maketitle

\begin{abstract}
We propose a score-based stabilization framework for numerical simulation of partial differential equations, in which a learned score model defines a stabilization operator applied to provisional numerical updates. This operator augments standard time-stepping schemes by enforcing structure and physical consistency through a correction that drives iterates toward the manifold of admissible states. We show that the stabilization operator acts as a contraction toward this manifold, yielding a correction mechanism with basin-conditional stability. Numerical experiments on Advection, Korteweg–de Vries (KdV), Nonlinear Schrödinger (NLS), and Burgers’ equations demonstrate improved robustness, suppression of nonphysical instabilities, and preservation of qualitative dynamics.
\end{abstract}


\section{Introduction}

The numerical solution of nonlinear time-dependent partial differential equations proves challenging, particularly in regimes where nonlinear effects are strong and fine resolution is costly. Time-stepping methods face a fundamental trade-off between stability, accuracy, and computational efficiency. Moreover, for many long-time simulations of physical systems, maintaining numerical stability and preserving the structural properties of the solution are often more important than minimizing the local truncation error. In particular, preserving physical invariants such as mass, momentum, energy, or Hamiltonian structure is frequently essential for obtaining physically meaningful simulations over long time horizons \citep{lax1973hyperbolic,leveque2002finite,hlw}. Explicit high-order schemes often require very small timesteps, while implicit methods are more computationally expensive per step.

Many numerical methods for for time-stepping can be viewed as a two-stage procedure. First, the solution is updated, and may become unstable when large timesteps are used. Second, a stabilization step is applied to reduce nonphysical effects, but might come at the cost of losing important solution properties \citep{boris1973flux}.

The challenge is especially significant for time dependent PDEs. In equations such as the Korteweg-de Vries (KdV) equation \citep{Ascher2004Multisymplectic, Ascher2005Symplectic}, conservative non-damping discretization methods coupled with insufficient resolution high-order derivatives restrict the allowable timestep for explicit methods, and insufficient resolution of nonlinear interactions can lead to instability over long-time behavior.


\begin{figure}[t]
  \centering
  \includegraphics[width=\linewidth]{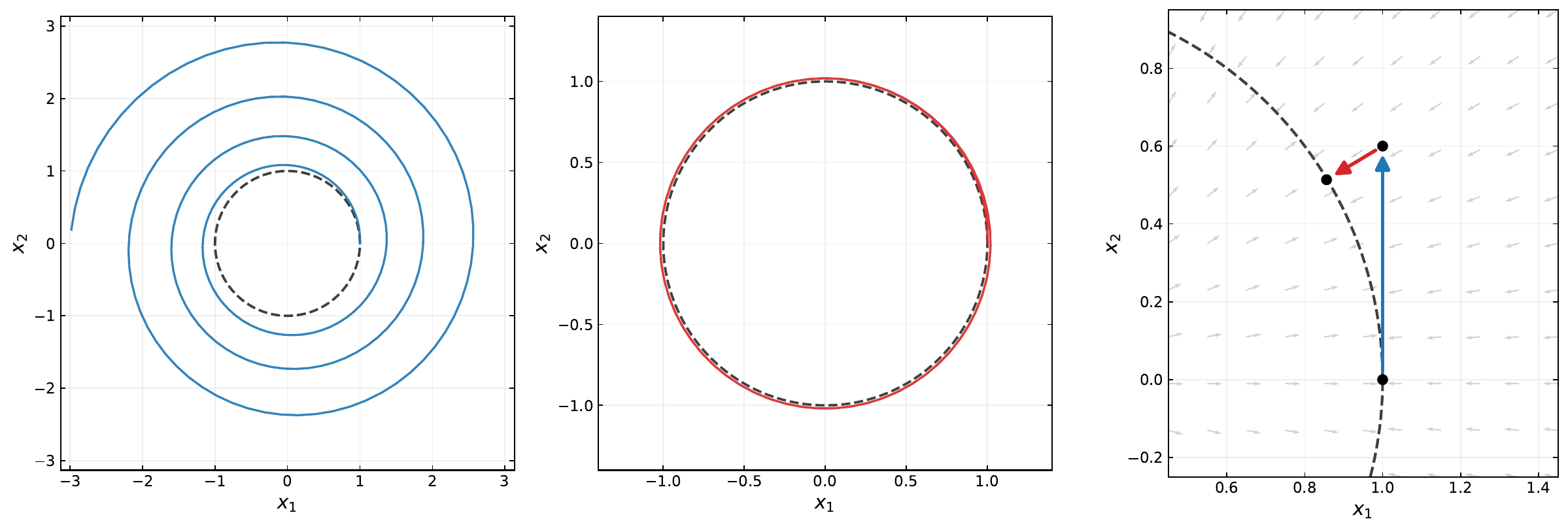}
  \caption{%
    Score-based correction on a unit-circle model problem.
    \textbf{Left:} explicit forward Euler scheme which spirals steadily outward and drifts off the manifold.
    \textbf{Center:} score-corrected scheme for the same step size.
    \textbf{Right:} a single step illustrating the
    mechanism. The gray arrows
    show the score field $s(x)$, and the dashed curve represents the exact solution.%
  }
  \label{fig:circle-overview}
\end{figure}

Recent advances in machine learning, particularly score-based generative modeling, provide new tools for representing complex high-dimensional probability distributions. Central to these methods is the score function \citep{Song2021ScoreBasedSDE}, defined as the gradient of the log-density of a target distribution. Score-based corrections are widely used in denoising and sampling \citep{luo2021score}.

This suggests an alternative to standard stabilization methods in an attempt to avoid artificial damping of the computed solution. If the set of physically meaningful states can be described or approximated by a probability density function, then, a numerical update can be adjusted using a probabilistic correction. Instead of introducing heuristic stabilization terms, the correction is informed by the learned manifold of physically admissible solution states. 
Unlike previous machine learning approaches that primarily seek to improve point-wise prediction accuracy \citep{kochkov2021machine, bar2019learning}, the objective of the score model here is stabilization without losing important solution characteristics. Rather than correcting individual numerical errors, it acts to preserve the qualitative character of the solution, steering unstable numerical trajectories back toward physically meaningful states and ensuring that the numerical evolution remains faithful to the underlying physical dynamics.

While learning the score function requires an offline training stage,
this cost is amortized across many simulations. Many scientific computing
applications such as weather prediction, repeatedly solve the same governing PDE under different
initial conditions. Since the learned score
captures the manifold of admissible states rather than a single trajectory,
the same model can be reused across an entire family of simulations.
This enables inexpensive numerical schemes to operate stably in regimes
where they would otherwise fail, avoiding the need for more costly
integrators.


In this work, we therefore explore the score-based stabilization idea by augmenting standard time-stepping schemes with a learned score-based correction applied at each step. An illustration of an example for our proposed method is presented in \Cref{fig:circle-overview}, with a detailed description in \Cref{sec:simple-example}. The objective is not to replace the underlying numerical solver altogether, but to stabilize it in a principled manner that permits larger timesteps while preserving the structural character of the solution and the exact conservation of some critical physical invariants.

\paragraph{Contributions.}
The contributions of this work are:
\begin{itemize}
\item We propose a novel \emph{two-stage stabilization framework} that augments standard time-stepping numerical schemes with a learned, probabilistic score-based correction.

\item We demonstrate that this score correction acts as a versatile stabilization layer, \emph{successfully mitigating the nonphysical behaviors and/or numerical instabilities} that cause classical integrators to fail or blow up in unstable regimes.

\item We show that the score-corrected scheme achieves \emph{conditional manifold stability} with respect to the solution manifold, if there is one. Specifically, we demonstrate that the stability bound is independent of the number of time steps and circumvents traditional strict timestep restrictions, provided the provisional update remains within the valid basin of attraction of the score correction.

\item We validate our approach across a \emph{diverse set of initial-boundary value PDEs}, including Advection, Korteweg--de Vries (KdV), Nonlinear Schrödinger (NLS), and Burgers' equations.


\end{itemize}

\section{Related Work}

\textbf{Numerical Stabilization and Two-Stage Methods.} 
The challenge of suppressing numerical artifacts and avoiding numerical blow-up in the integration of nonlinear partial differential equations has a long history. Classical stabilization techniques often rely on injecting artificial viscosity or using damping numerical discretizations to dampen high-frequency oscillations that arise near shocks or steep gradients \citep{vonneumann1950method}. Modern extensions of this idea evolved into Total Variation Diminishing (TVD) schemes \citep{harten1997high} and Weighted Essentially Non-Oscillatory (WENO) methods \citep{jiang1996efficient}, which adaptively manage local numerical diffusion. Other classical approaches rely on spectral filtering \citep{gottlieb2001spectral} to selectively damp or truncate high-frequency modes in pseudo-spectral environments. Such schemes frequently adopt a two-stage architecture. In the first stage, a provisional baseline update is computed, which may introduce numerical dispersion or push the state off a governing manifold. In the second stage, a stabilization operator—such as a flux limiter, spatial filter, or structural projection—is applied to enforce physical constraints or smooth nonphysical ripples \citep{maccormack2003effect}. While effective, these hand-crafted fixes inherently trade off accuracy for stability, often resulting in the loss of important, physically meaningful solution properties, such as excessive dissipation of fine-scale features or physical invariants \citep{godunov1959difference, leveque2002finite}.

\textbf{Score Models and Score Functions.} 
Recent advancements in generative modeling have established score-based frameworks as state-of-the-art for representing high-dimensional data distributions \citep{ho2020denoising}. Central to these frameworks is the score function, which describes the gradient of the log-density of the data \citep{Song2021ScoreBasedSDE}. By training neural networks via score matching to estimate this vector field, corrupted states can be iteratively denoised and guided back toward the target distribution. This process is inherently tied to the noise-level relation: score models are typically conditioned on varying levels of Gaussian noise, which dictate the magnitude and structure of the correction. High noise levels pull severely corrupted states from broad domains toward the general manifold, while lower noise levels refine the local, high-frequency structure \citep{meng2021sdedit}. In our context, we leverage this learned vector field not to generate images, but to push an unstable numerical state back onto the manifold of physically valid PDE solutions. This mirrors recent efforts to adapt diffusion models for Riemannian manifolds and constrained physical systems, ensuring trajectories remain physically consistent without relying on hand-crafted heuristics \citep{de2022riemannian}.

\textbf{Machine Learning for Numerical Stability.} 
Beyond generative corrections, there is a growing body of work leveraging machine learning to tackle numerical instability and quantify solver errors. Previous work on data-driven discretizations \citep{bar2019learning} and hybrid machine-learning solvers \citep{kochkov2021machine} has demonstrated that neural networks can actively correct coarse or unstable numerical steps. Similarly, Physics-Informed Neural Networks (PINNs) integrate governing equations directly into the learning process to ensure physical consistency \citep{raissi2019physics}. Parallel to these deterministic correctors, probabilistic frameworks have emerged to model the uncertainties introduced by standard numerical discretizations. Work by \citet{matsuda2021estimation} treats discretization errors as random variables, allowing their variances to be updated concurrently with system parameters. Building on this, \citet{miyatake2025quantifying} imposed a monotonicity constraint on discretization error variances using Bayesian isotonic regression. Further advancements address these inaccuracies through joint parameter evaluation \citep{toyota2025joint}, demonstrating how probabilistic techniques can dynamically estimate and correct the error trajectories induced by numerical solvers.

\section{Preliminaries}

This section presents the framework for the two-stage stabilization viewpoint and the associated score-based correction.

\subsection{Stabilization as a Two-Stage Numerical Process}

Consider a nonlinear equation of the form
\begin{equation}
\label{eq:pde}
\frac{d x}{d t} = \mathcal{N}(x),
\end{equation}
where the vector function $x(t)$ denotes the state of the system at a time $t$ and $\mathcal{N}$ is a nonlinear operator arising, for example, from spatial discretization of a PDE.

Considering a time step of length $h$ from $t_k$ to $t_{k+1} = t_k + h$, many numerical schemes for \Eqref{eq:pde} can be abstractly written as a two-stage method,
\begin{align}
\label{eq:two_step}
\hat{x}_{k+1} &= G(x_k), \\
x_{k+1} &= F(\hat{x}_{k+1}),
\end{align}
where $G$ is a time integration operator, and $F$ is a stabilization operator.
The update $\hat{x}_{k+1}$ is typically stable only when timestep restrictions are respected. When these restrictions are violated, $\hat{x}_{k+1}$ may exhibit nonphysical behavior such as blowup, oscillations or violation of invariant constraints. The role of $F$ is to mitigate such effects.


\subsection{Score-Based Methods}\label{sec:score_corrections}

Score-based modeling introduces the \emph{score}, which characterizes local changes of the
log-density in state space \citep{Hyvarinen2005ScoreMatching}. Let $p(x)$ denote a probability density over admissible states
$x\in\mathbb{R}^d$. The score is defined as
\begin{equation}\label{eq:score_def}
s(x) = \nabla_x \log p(x).
\end{equation}

\paragraph{Score points toward higher probability.}
Define the negative log-density
\begin{equation}\label{eq:energy_def}
E(x) \;=\; -\log p(x),
\end{equation}
where $p(x) > 0$, so that higher probability corresponds to smaller $E(x)$. Since
\begin{equation}\label{eq:grad_energy}
\nabla E(x) \;=\; -\nabla \log p(x) \;=\; -s(x),
\end{equation}
a standard gradient descent step to \emph{decrease} $E$ is
\begin{equation}\label{eq:gd_energy}
x_{\text{new}} \;=\; x - \eta \nabla E(x) \;=\; x + \eta\, s(x), \qquad \eta>0.
\end{equation}
Thus, the score $s(x)$ provides a local direction that moves the state toward regions where $p(x)$ is larger, for stepsize $\eta$ sufficiently small.

\paragraph{Gaussian perturbations and the score correction.}
In many applications of score-based methods, it is possible to model a perturbed state as
\begin{equation}\label{eq:noise_model}
\hat{x} = x + \varepsilon, \qquad \varepsilon \sim \mathcal{N}(0,\sigma^2 I),
\end{equation}
where $x$ is an (unknown) admissible state and $\hat{x}$ is an observed or computed perturbation.
A Gaussian model provides a simple local approximation to aggregated small errors.

Under \Eqref{eq:noise_model}, the posterior mean satisfies the identity (Tweedie's formula \citep{Efron2011Tweedie})
\begin{equation}\label{eq:tweedie_statement}
\mathbb{E}[x \mid \hat{x}]
\;=\;
\hat{x} + \sigma^2 \nabla_{\hat{x}} \log p(\hat{x})
\;=\;
\hat{x} + \sigma^2 s(\hat{x}),
\end{equation}
where $p(\hat{x})$ denotes the density of the perturbed variable induced by $p(x)$ and Gaussian noise.
Thus, $\sigma^2 s(\hat{x})$ gives a principled correction direction, and the factor $\sigma^2$
sets the correction magnitude: larger perturbations (larger $\sigma$) imply a stronger shift.

\paragraph{ODE viewpoint.}
The update in \Eqref{eq:gd_energy} can be viewed as taking one step in the direction of the vector field $s(x)$.
A convenient way to formalize this is to introduce the ODE
\begin{equation}\label{eq:score_flow}
\frac{dx}{d\tau} = s(x) = \nabla_x \log p(x),
\end{equation}
whose solutions move in the score direction. Discretizing \Eqref{eq:score_flow} using forward Euler gives
\begin{equation}\label{eq:euler_score}
x(\tau+\eta) \approx x(\tau) + \eta\, s(x(\tau)),
\end{equation}
which has exactly the same form as \Eqref{eq:gd_energy}. This well-known  ODE interpretation is useful for
describing the correction as a short score-driven evolution applied after each numerical time step.

\section{Method}
\label{sec:score_method}

We describe the stabilization strategy for time-dependent nonlinear equations that augments a standard time integrator with a score-based correction. The aim is to improve stability and permit larger timesteps while preserving the structure of the base solver.

\subsection{Algorithmic Formulation}

Let $G_{\Delta t}$ denote a chosen baseline time-stepping operator
with timestep $\Delta t$. Given the current state $x_k$, the proposed
method consists of two stages:

\paragraph{1. Provisional update.}
Compute a standard numerical step
\begin{equation}
\hat{x}_{k+1}=G_{\Delta t}(x_k).
\label{eq:predictor}
\end{equation}

\paragraph{2. Stabilization layer.}
Apply a stabilization operator
\begin{equation}
x_{k+1}=\mathcal{S}(\hat{x}_{k+1}),
\label{eq:stabilization}
\end{equation}
where $\mathcal{S}$ is built around a learned score function.

In its simplest form, the stabilization layer consists of a single
score correction,
\begin{equation}
\mathcal{S}(x)=x+\sigma^2 s(x),
\label{eq:score_update}
\end{equation}
where
$s(x)=\nabla\log p(x)$
and $\sigma>0$ represents the effective magnitude of the perturbations
introduced by the provisional update. The parameter $\sigma$
controls the strength of the stabilization.
More generally, $\mathcal{S}$ may incorporate additional
structure-preserving operations whenever they are available.
For example, if the underlying PDE possesses known invariants such as
mass, momentum, or Hamiltonian, the stabilization layer may combine
the learned score correction with exact projections that enforce these
conservation laws.

The underlying numerical solver $G_{\Delta t}$ remains unchanged, and the stabilization operator $\mathcal{S}$ is applied as a post-processing map acting on the provisional state. 
In practice, the score function $s(x)$ is approximated by a neural network trained offline using samples of physically admissible states. At inference time, no additional optimization or sampling procedure is required, and stabilization consists solely of evaluating the learned score field and applying the resulting correction within $\mathcal{S}$.

\subsection{Score Model and Training Objective}
To implement the stabilization operator, we train a neural network $s_\theta(x)$ to approximate the score function of a noise-smoothed version of the target distribution $p(x)$ over the manifold of admissible states. Under this setup, the model learns a noise-smoothed/denoising score function.

Let $x$ represent a physically admissible state drawn from the target distribution $p_{\mathrm{data}}(x)$, and let $\tilde{x} = x + \varepsilon$ represent a perturbed state, where $\varepsilon \sim \mathcal{N}(0, \sigma^2 I)$. The denoising score matching (DSM) objective function used to train the parameters $\theta$ of our neural network $s_\theta$ is defined as:
\begin{equation}
\begin{split}
\mathcal{L}_{\mathrm{DSM}}(\theta) &= \mathbb{E}_{\sigma \sim p(\sigma)} \mathbb{E}_{x \sim p_{\mathrm{data}}(x)} \mathbb{E}_{\varepsilon \sim \mathcal{N}(0, \sigma^2 I)} \\
&\quad \left[ \lambda(\sigma) \left\| s_\theta(x + \varepsilon, \sigma) - \nabla_{\tilde{x}} \log p(\tilde{x} \mid x) \right\|_2^2 \right],
\end{split}
\label{eq:dsm_loss}
\end{equation}
where $\lambda(\sigma) = \sigma^2$ is a weight function that balances the loss across scales, and the conditional score can be computed analytically as:
\begin{equation}
\nabla_{\tilde{x}} \log p(\tilde{x} \mid x) = -{(\tilde{x} - x)}{\sigma^{-2}} = -{\varepsilon}{\sigma^{-2}}.
\label{eq:cond_score}
\end{equation}
Substituting \Eqref{eq:cond_score} into \Eqref{eq:dsm_loss} yields the explicit training objective:
\begin{equation}
\begin{split}
\mathcal{L}_{\mathrm{DSM}}(\theta) &= \mathbb{E}_{\sigma \sim p(\sigma)} \mathbb{E}_{x \sim p_{\mathrm{data}}(x)} \mathbb{E}_{\varepsilon \sim \mathcal{N}(0, \sigma^2 I)} \\
&\quad \left[ \left\| \sigma s_\theta(x + \varepsilon, \sigma) + {\varepsilon}{\sigma^{-1}} \right\|_2^2 \right].
\end{split}
\label{eq:dsm_loss_explicit}
\end{equation}
By minimizing this objective, the network $s_\theta(\tilde{x}, \sigma)$ learns to predict the noise-smoothed score \(\nabla_{\tilde{x}} \log p_\sigma(\tilde{x})\). At inference time, the learned score model acts as a vector field pointing from perturbed or unstable states back toward the high-density manifold of valid physical solutions.

\subsection{A Simple Example}
\label{sec:simple-example}

To visually illustrate our method, consider a simple state \(x = (x_1, x_2) \in \mathbb{R}^2\) evolving under the linear harmonic oscillator
\begin{equation}
  \frac{dx}{dt} = A\,x,
  \qquad
  A = \begin{pmatrix} 0 & -1 \\ 1 & 0 \end{pmatrix}.
  \label{eq:circle-ode}
\end{equation}
Taking \(\lVert x(0) \rVert = 1\), the exact solution lives on the unit-circle manifold:
\begin{equation}
  \mathcal{M} = \bigl\{\, x \in \mathbb{R}^2 : \lVert x \rVert^2 = 1 \,\bigr\}.
  \label{eq:circle-manifold}
\end{equation}

We associate with $\mathcal{M}$ the potential energy $U(x) = \tfrac{1}{4}\bigl(\lVert x \rVert^2 - 1\bigr)^2$, defining a score field that points toward the manifold:
\begin{equation}
  s(x) = -\nabla U(x) = -x\,\bigl(\lVert x \rVert^2 - 1\bigr).
  \label{eq:circle-score}
\end{equation}

Discretizing \Eqref{eq:circle-ode} via the forward Euler scheme gives
$
  x^{k+1} = (I + hA)\,x^{k}
  \label{eq:circle-euler}
$.
Since the discrete radius grows as $\lVert (I+hA)\,x \rVert = \sqrt{1+h^2}\,\lVert x \rVert$ for any step size $h>0$, the uncorrected numerical solution inevitably spirals outward.

Our scheme augments this unstable predictor with a score correction step to guide the state back to $\mathcal{M}$. \Cref{fig:circle-overview} contrasts the two approaches: the plain forward Euler trajectory drifts rapidly off the manifold (\emph{left}), while the score-corrected scheme remains stable on the unit circle (\emph{center}). This stability of the score correction neutralizes the tangential overshoot introduced by the baseline predictor (\emph{right}).

\subsection{Training Data}

The score model requires only samples from the target distribution $p$ of
physically admissible states and does not require a closed-form solution or
paired coarse/fine trajectories. This distribution is a property of the
governing equation and the physical regime rather than any particular initial
condition, allowing a single trained model to characterize the admissible
state manifold for an entire family of simulations.

In many practical settings, suitable training data already exist in the form
of high-resolution simulations generated for parameter studies or uncertainty
quantification. When such data is unavailable,
training samples can be generated offline using a trusted high-fidelity
solver over a representative ensemble of initial conditions.
Once trained, the score model can be reused for different initial guesses. During
deployment, stabilization requires only a single evaluation of the learned
score field at each correction step, making the online computational overhead
small.

\subsection{Mathematical Analysis}
\label{sec:math_analysis}

We analyze the stability properties of the proposed stabilization framework. Let $X$ be a Hilbert space and let $\mathcal{M}\subset X$ denote the manifold of physically admissible states. We aim to show that the two-stage scheme maintains the state close to $\mathcal{M}$ without sacrificing the accuracy of the baseline solver. 

To establish this result, we first prove three lemmas. First, we establish that applying the score-based correction does not degrade the underlying order of accuracy of the base numerical integrator. Second, we bound the geometric deviation introduced by the uncorrected provisional step, characterizing how far the baseline solver pulls the state away from the manifold in a single step. Finally, we guarantee that a gradient-based correction acts as a strong contraction, actively pulling the provisional state back toward the valid distribution.

Using these lemmas, we then analyze the global behavior of the stabilized scheme. We demonstrate that if the numerical step remains within the correction's basin of attraction, the two-stage method achieves bounded manifold distance uniformly over time.

\begin{theorem}[Conditional Manifold Stability]
Let
\[
r_j:=\operatorname{dist}_X(u_j,\mathcal{M})
\]
denote the constraint residual, and let $\varepsilon>0$. Let $\rho_K$ be
the contraction factor from Lemma~3. Assume that whenever
$r_j\le\varepsilon$, the predictor stays in the valid basin of attraction
of the score correction and satisfies
\[
r_{\widehat{j+1}}
\le L_h\varepsilon+Ch^q,
\qquad
r_{\widehat{j+1}}
:=
\operatorname{dist}_X(\widehat{u}_{j+1},\mathcal{M}).
\]
Assume further that
$
\rho_K\bigl(L_h\varepsilon+Ch^q\bigr)\le\varepsilon
$
and $r_0\le\varepsilon$. Then the corrected scheme satisfies
\[
r_j\le\varepsilon
\qquad\text{for all } j\ge0.
\]
In particular, the bound is independent of the number of time steps.
\end{theorem}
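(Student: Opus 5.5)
The plan is a straightforward induction on $j$. The invariant is $r_j\le\varepsilon$. The key tool is the contraction property of the score correction from Lemma~3, which I take in the form
\[
\operatorname{dist}_X(\mathcal{S}(\widehat{u}),\mathcal{M})\le\rho_K\,\operatorname{dist}_X(\widehat{u},\mathcal{M})
\]
for every $\widehat{u}$ in the valid basin of attraction. If Lemma~3 is stated for $K$ iterated correction steps, then $\rho_K$ is the resulting composite factor, and the argument is unchanged.

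\emph{Base case.} This is exactly the hypothesis $r_0\le\varepsilon$.

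\emph{Inductive step.} Suppose $r_j\le\varepsilon$. By the hypothesis of the theorem, the predictor $\widehat{u}_{j+1}=G_{\Delta t}(u_j)$ lies in the valid basin and satisfies $r_{\widehat{j+1}}\le L_h\varepsilon+Ch^q$. Since $\widehat{u}_{j+1}$ is in the basin, Lemma~3 applies to $u_{j+1}=\mathcal{S}(\widehat{u}_{j+1})$, giving
\[
r_{j+1}\le\rho_K\,r_{\widehat{j+1}}\le\rho_K\bigl(L_h\varepsilon+Ch^q\bigr)\le\varepsilon .
\]
The second inequality uses the predictor bound together with $\rho_K\ge0$, and the last uses the compatibility assumption. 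This closes the induction, so $r_j\le\varepsilon$ for all $j\ge0$.

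\emph{Independence from the number of steps.} The recursion never accumulates: each step restores the same threshold $\varepsilon$, and $\varepsilon$ depends only on $\rho_K,L_h,C,h,q$. There is no Gronwall-type growth in $j$, which is the stated uniformity.

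\emph{Main obstacle.} The only delicate point is making sure Lemma~3 is applied under exactly its hypotheses. Its contraction is local, valid only inside the basin of attraction, and this is why the theorem is conditional. The induction hypothesis $r_j\le\varepsilon$ has to feed, through the assumed implication, into basin membership of $\widehat{u}_{j+1}$ at every step. I will also check that the distance Lemma~3 controls is $\operatorname{dist}_X(\cdot,\mathcal{M})$ in the same Hilbert norm. If Lemma~3 instead bounds the distance to a nearest-point projection, or uses an energy $E$, I would insert the equivalence (via the projection onto $\mathcal{M}$ within the tubular neighborhood) before chaining the inequalities.
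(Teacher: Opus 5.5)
Your proposal is correct and matches the paper's proof essentially line for line: induction on $j$ with base case $r_0\le\varepsilon$, the predictor bound plus basin membership feeding Lemma~3's estimate $r_{j+1}\le\rho_K r_{\widehat{j+1}}$, and the compatibility condition closing the step. Your concern about the form of Lemma~3 is already settled in the paper. There the contraction is stated directly for $\operatorname{dist}_X(\cdot,\mathcal{M})$ with $\rho_K=\sqrt{c_+/c_-}\,(1-\mu\eta)^{K/2}$. The energy--distance equivalence is absorbed through the constants $c_\pm$ on the sublevel set $\mathcal{N}$, which plays the role of the basin.
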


Classical timestep restrictions such as the CFL condition are not imposed directly here. However, the timestep $h$ remains implicitly restricted by the requirement that the predictor state $\widehat{u}_{j+1}$ lie within the valid basin of attraction of the score correction. Full details and proofs are provided in Appendix A.

\begin{figure}[t]
    \centering
    \includegraphics[width=\linewidth]{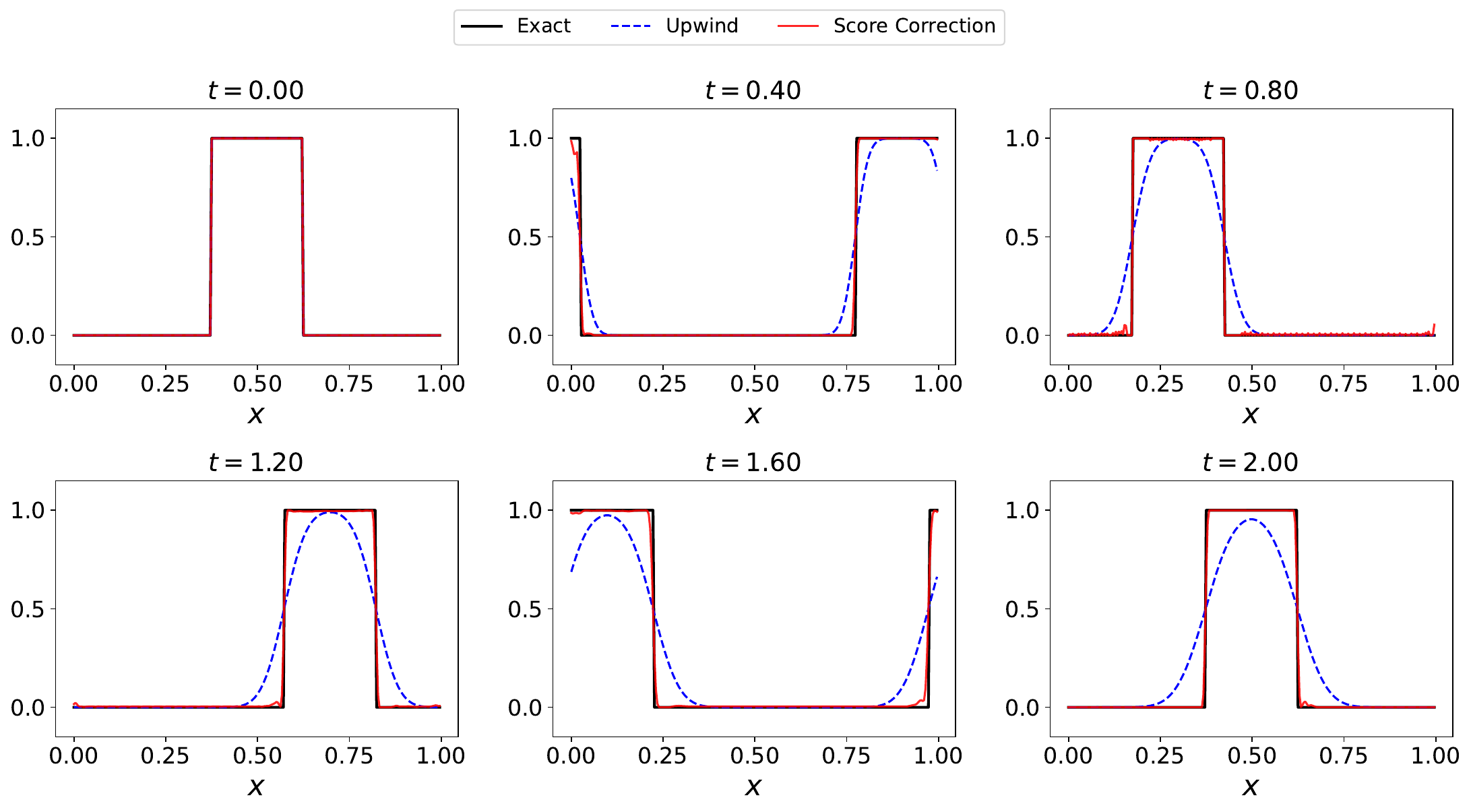}
    \caption{Box-pulse advection. The score-corrected method (red) maintains the plateau.}
    \label{fig:transport_box_snapshots}
\end{figure}

\section{Experiments}
\label{sec:experiments}

We test our method over several well-known PDE instances, specifically 1D Advection \citep{leveque1992numerical}, Korteweg–de Vries (KdV) \citep{Ascher2005Symplectic}, Nonlinear Schrödinger (NLS) \citep{sulem2007nonlinear}, and Burgers' equations \citep{burgers1948mathematical}, where for the latter we use the setting provided by PDEBench \citep{PDEBench2022}. The core objective of these experiments is to demonstrate that in settings where standard numerical methods fail or diverge, applying our score-based correction successfully stabilizes the dynamics and enables accurate long-time integration.

We evaluate our approach against uncorrected baselines and standard stabilization schemes: TVD \citep{harten1997high}, WENO-5 \citep{jiang1996efficient}, and Spectral Filtering \citep{gottlieb2001spectral}. While these classical methods effectively prevent catastrophic instabilities, they frequently introduce significant numerical diffusion that over-smooths sharp profiles or remain highly specialized to specific physical regimes. We note that some classical stabilization methods are not universally applicable
in all problem settings.

Across all experiments, the learned score model $s_\theta$ is parameterized as a convolutional U-Net \citep{ronneberger2015u}.
During the offline phase, the model is trained across several various noise levels. Technical details, additional implementation details of the baseline methods, and ablation study are provided in the appendices. 

\begin{figure}[t]
    \centering
    \includegraphics[width=\linewidth]{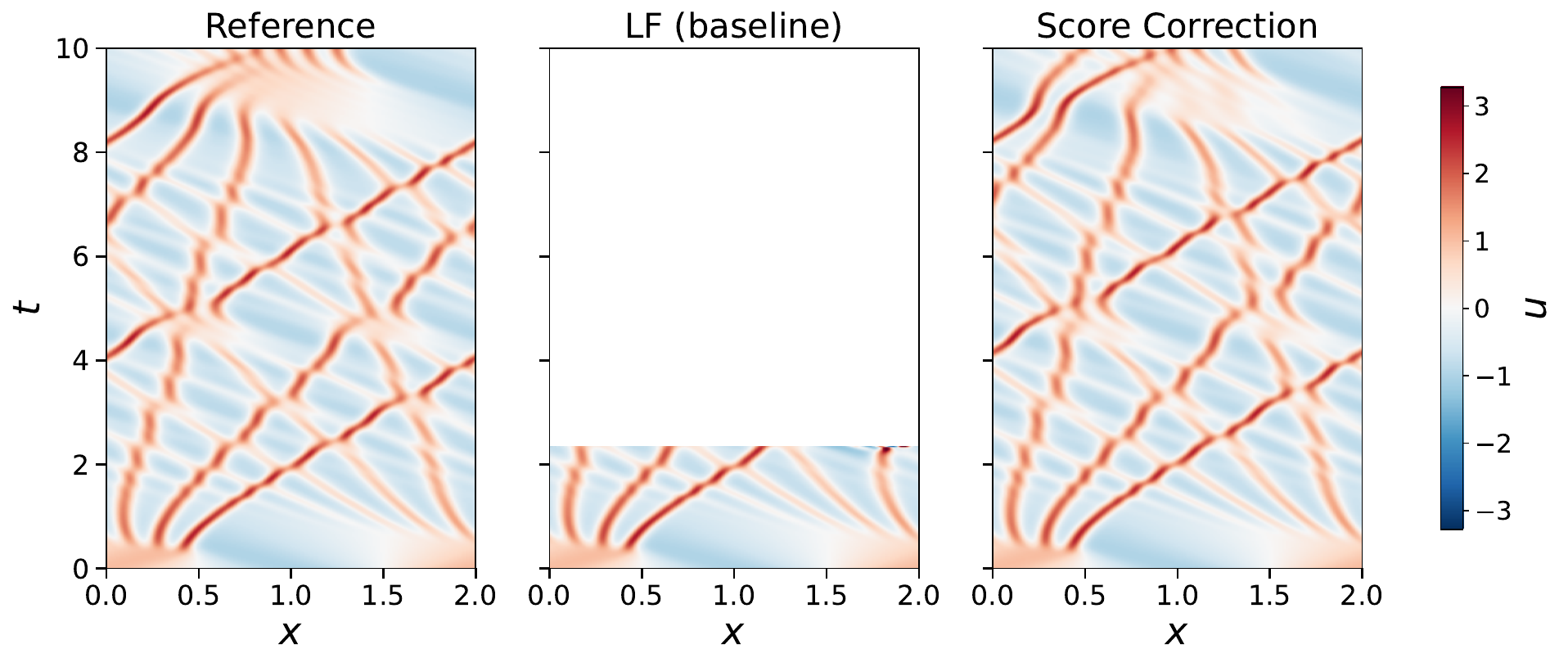}
    \caption{KdV results using
    $\Delta t = 5\!\times\!10^{-4}$. \emph{Left to right:}
    reference; leap-frog baseline (blows up at $t \approx 2.34$); Score
    Correction. The Score Correction reproduces the
    diagonal soliton trajectories of the reference.}
    \label{fig:kdv_spacetime_dt5e-4}
\end{figure}

\subsection{1D Advection}
\label{subsec:phase1_advection}

We consider the 1D advection equation 
\begin{equation}
    u_t + a u_x = 0, \qquad x \in [0,L], \quad t \in [0,T],
\end{equation}
with constant advection speed $a > 0$ and periodic boundary conditions.
We discretize on a uniform grid with $N = 256$ cells over $[0,1]$ and choose the first-order upwind scheme as the baseline method $G_{\Delta t}$. The provisional update reads
\begin{equation}
    \hat{u}^{\,n+1}_j = u^n_j - \nu\bigl(u^n_j - u^n_{j-1}\bigr), \qquad \nu = {a\,\Delta t}\cdot{(\Delta x)^{-1}}.
\end{equation}
The upwind scheme is stable for small values of \(\nu \leq 1\), but numerical instability is introduced  for $\nu > 1$.

Consider the following box pulse initial condition over the domain $[0,1]$ at final time $T = 2$ (two full domain traversals):

\begin{equation}
    u_0(x) = \mathbf{1}_{[x_1,\, x_1 + w]}(x).
\end{equation}
This discontinuous profile tests whether the correction sharpens diffused fronts without generating spurious oscillations. Results are reported in Figure~\ref{fig:transport_box_snapshots}, where we applied the score correction using step size magnitude of $0.03$.
The score-based correction effectively mitigates the numerical errors of the baseline upwind scheme. For the smooth Gaussian profile, the corrected method preserves the peak amplitude throughout the integration. In the box scenarios, the correction maintains the flat plateaus of the single box pulse and preserves both the amplitude and the gap separation in the two-box configuration. 

\begin{table*}[t]
    \centering
    \footnotesize
\caption{Performance of score-based corrections for the KdV equation. The baseline LF scheme becomes unstable and blows up at $t = 2.34$. Classical stabilizers avoid blow-up but sacrifice structure: WENO-5 over-diffuses the solitons, while spectral filtering badly violates the Hamiltonian. Only the score correction stabilizes the integration up to $t = 10$ while conserving the invariants.}

    \label{tab:kdv_marginal}
    \begin{tabular}{ l c | c c c | c c c}
        \hline
         \multirow{2}{*}{method} & \multirow{2}{*}{blew up} & \multicolumn{3}{c}{at $t = 2$} & \multicolumn{3}{c}{at $t = 10$} \\
        \cline{3-5} \cline{6-8}
          & & $\mathrm{err}_{I_2}$ & $\mathrm{err}_H$ & $\mathrm{err}_{\mathrm{pt}}$ & $\mathrm{err}_{I_2}$ & $\mathrm{err}_H$ & $\mathrm{err}_{\mathrm{pt}}$ \\
        \hline
         LF (baseline)                  & 2.34   & $4.30\times10^{-2}$ & $7.85\times10^{-1}$ & 0.43 &       & \textit{div.}      &   \\
         WENO-5             & --     & $2.12\times10^{-1}$  & $6.12$              & 0.78 & $5.03\times10^{-1}$  & $8.22$               & 0.91 \\
         Spectral Filtering  & --     & $1.28\times10^{-6}$  & $2.54$              & 0.10 & $5.95\times10^{-6}$  & $2.74$               & 0.09 \\
Score Correction  & -- & 0        & 1.42$\times10^{-8}$ & 0.42 & 0        & 4.52$\times10^{-8}$  & 0.70 \\
        \hline
    \end{tabular}
\end{table*}

\begin{table*}[t]
    \centering
    \footnotesize
    \caption{NLS results using $\Delta x = 0.1$. The uncorrected baseline suffers from severe amplitude distortion and unbounded Hamiltonian error growth. Spectral filtering keeps the energy error bounded by leaving a residual drift, and collapse the wave amplitude at $\Delta t = 0.5$. Only the score-corrected scheme preserves the wave's structural integrity while restricting energy drift at both timesteps.}
    \label{tab:nls_marginal}
    \begin{tabular}{c l | c c | c c | c c}
    \hline
    \multirow{2}{*}{$\Delta t$} & \multirow{2}{*}{method} & \multicolumn{2}{c|}{$t = 100$} & \multicolumn{2}{c|}{$t = 200$} & \multicolumn{2}{c}{$t = 400$} \\
    \cline{3-8}
     & & \(\max_x|\psi|\) & err($H$) & \(\max_x|\psi|\) & err($H$) & \(\max_x|\psi|\) & err($H$) \\
    \hline
    \multirow{3}{*}{0.20} & Baseline & 1.061 & $4.899\!\times\!10^{-1}$ & 1.104 & $5.518$ & 1.049 & $1.788\!\times\!10^{1}$ \\
  & Spectral Filtering & 1.061 & $1.068\!\times\!10^{-1}$ & 1.105 & $1.842\!\times\!10^{-1}$ & 1.040 & $1.983$ \\
  & Score Correction     & 0.979 & $6.174\!\times\!10^{-2}$ & 0.983 & $2.199\!\times\!10^{-2}$ & 0.985 & $1.618\!\times\!10^{-1}$ \\
    \hline
    \multirow{3}{*}{0.50} & Baseline & 0.850 & $4.596\!\times\!10^{2}$ & 0.651 & $3.909\!\times\!10^{2}$ & 0.641 & $4.899\!\times\!10^{2}$ \\
  & Spectral Filtering & 0.362 & $1.077\!\times\!10^{2}$ & 0.378 & $9.244\!\times\!10^{1}$ & 0.375 & $8.708\!\times\!10^{1}$ \\
  & Score Correction     & 0.971 & $3.07\!\times\!10^{-2}$ & 0.965 & $9.428\!\times\!10^{-2}$ & 0.967 & $1.984\!\times\!10^{-2}$ \\
    \hline
    \end{tabular}
\end{table*}

\subsection{Korteweg--de Vries (KdV)}\label{sec:kdv}
Consider the nonlinear PDE:
\begin{equation}\label{eq:kdv}
    u_t \;=\; \beta\,(u^2)_x \,+\, \nu\, u_{xxx},
    \qquad x \in [0, L],\; t \in [0, T],
\end{equation}
with periodic boundary conditions and coefficients
$\beta = -0.5$, $\nu = -(0.022)^2$.
We fix $L = 2$, $T = 10$, and initial condition
\begin{equation}\label{eq:kdv_ic}
    u_0(x) = \cos(\pi x).
\end{equation}
The equation admits three quantities that are invariant in time. It preserves the mass and the $L^2$ norm, and the Hamiltonian:
\begin{equation}\label{eq:kdv_invariants}
  H(u) = -\tfrac{1}{6}\!\int_0^L u^3\,dx + \tfrac{-\nu}{2}\!\int_0^L u_x^2\,dx.
\end{equation}


On the inference grid with
$N = 200$ cells over $[0,L]$ ($\Delta x = L/N = 0.01$) and time-step $\Delta t$, we utilize the conservative three-level
method by \citet{zakr}
\begin{equation}\label{eq:zk_lf}
\begin{split}
  u_j^{n+1} &= u_j^{n-1} + \tfrac{2\beta\,\Delta t}{3\,\Delta x}\bigl(u_{j-1}^n + u_j^n + u_{j+1}^n\bigr)\bigl(u_{j+1}^n - u_{j-1}^n\bigr) \\
  &\quad + \tfrac{\nu\,\Delta t}{\Delta x^3}\bigl(u_{j+2}^n - 2u_{j+1}^n + 2u_{j-1}^n - u_{j-2}^n\bigr).
\end{split}
\end{equation}

For \(\hat{u}^{\,n+1}\), we denote the leap-frog provisional update~\Eqref{eq:zk_lf}
and $s_\theta(u)$ the learned score
of Section~\ref{sec:score_corrections}.
For the KdV equation, the stabilization operator $\mathcal{S}$ is instantiated to incorporate both score-based correction and the enforcement of the conserved quantities. The resulting update combines a learned score correction with projections that enforce the invariants of the system.


Training samples are produced by
integrating \Eqref{eq:zk_lf} on a finer spatial grid
\(N_{\mathrm{fine}}\!=\!400\) ($\Delta x_{\mathrm{fine}}\!=\!0.005$) at
\(\Delta t_{\mathrm{fine}}\!=\!6\!\times\!10^{-5}\) from several
initial conditions, then spatially downsampled to the inference grid. We report the norm and Hamiltonian errors, along with the point-wise error,
obtained using a correction step size of $3{\times}10^{-4}$.

%

The primary objective of the score-based correction is to maintain stability and structure where classical methods fail. As detailed in Table~\ref{tab:kdv_marginal}, the baseline leap-frog (LF) scheme exhibits numerical blow-up.
To avoid phase drifting, we additionally feed the model the previous state during training.

Applying the score correction successfully stabilizes the integration up to $t = 10$, driving the $I_2$ error to machine precision and the Hamiltonian error ($\mathrm{err}_H$) to roughly $10^{-9}$. As shown in Table~\ref{tab:kdv_marginal}, it keeps pointwise errors low by ensuring the solitons do not drift from their true trajectories. Visually, the spacetime plot in Figure~\ref{fig:kdv_spacetime_dt5e-4} confirms this, with the diagonal soliton trajectories of the corrected scheme accurately tracking the slopes of the reference.

To understand this behavior, consider the stability condition obtained by
freezing the coefficients in \Eqref{eq:kdv},
\begin{equation}
\Delta t < {\Delta x}{(\nu/\Delta x^2 + 2|\beta u_{\max}|)^{-1}}.
\end{equation}
For the exact soliton solution, $|u_{\max}|$ is constant, so the condition
remains satisfied if it holds initially. In contrast, the uncorrected
numerical solution exhibits growth in $|u_{\max}|$, eventually violating the
stability bound and leading to blowup even if for a very small time-step size. The score correction prevents this
growth and maintains an approximately constant solution amplitude, thereby
preserving stability over long-time integration.

\begin{table*}[t]
    \centering
    \footnotesize
    \caption{1D Burgers on PDEBench. Mean absolute ($\mathrm{err}_{L^1}$), relative root-mean-square ($\mathrm{err}_{L^2}$), and maximum pointwise ($\mathrm{err}_{L^\infty}$) errors across the held-out test ensemble at $t=0.05$ and $t=0.5$. The uncorrected baseline, and the compared methods diverge (\textit{div.}), while only the score correction stabilizes the integration across all timesteps. The \emph{blew up} column reports the mean divergence time, and $N$ is the number of steps to $t=0.5$.}

    \label{tab:burgers_stability}
    \setlength{\tabcolsep}{4pt}
    \begin{tabular}{c c l | c | c c c | c c c}
        \hline

\multirow{2}{*}{$\Delta t$} & \multirow{2}{*}{$N$} & \multirow{2}{*}{method} & \multirow{2}{*}{blew up} & \multicolumn{3}{c|}{at $t = 0.05$} & \multicolumn{3}{c}{at $t = 0.5$} \\
\cline{5-7}\cline{8-10}
 & & & & $\mathrm{err}_{L^1}$ & $\mathrm{err}_{L^2}$ & $\mathrm{err}_{L^\infty}$ & $\mathrm{err}_{L^1}$ & $\mathrm{err}_{L^2}$ & $\mathrm{err}_{L^\infty}$ \\
 \hline
\multirow{5}{*}{$3.33\!\times\!10^{-3}$} & \multirow{5}{*}{150}
 & baseline & 0.066 & 0.205 & 1.231 & 7.115 & \multicolumn{3}{c}{\textit{div.}} \\
 & & TVD (Superbee) & 0.018 & \multicolumn{3}{c|}{\textit{div.}} & \multicolumn{3}{c}{\textit{div.}} \\
 & & WENO-5 & 0.021 & \multicolumn{3}{c|}{\textit{div.}} & \multicolumn{3}{c}{\textit{div.}} \\
 & & Spectral Filtering & 0.151 & 0.049 & 0.117 & 0.974 & \multicolumn{3}{c}{\textit{div.}} \\
 & & Score Correction & -- & 0.016 & 0.037 & 0.114 & 0.041 & 0.162 & 0.211 \\
 \hline
\multirow{5}{*}{$4.00\!\times\!10^{-3}$} & \multirow{5}{*}{125}
 & baseline & 0.056 & 0.319 & 2.688 & 18.55 & \multicolumn{3}{c}{\textit{div.}} \\
 & & TVD (Superbee) & 0.019 & \multicolumn{3}{c|}{\textit{div.}} & \multicolumn{3}{c}{\textit{div.}} \\
 & & WENO-5 & 0.019 & \multicolumn{3}{c|}{\textit{div.}} & \multicolumn{3}{c}{\textit{div.}} \\
 & & Spectral Filtering & 0.101 & 0.106 & 0.237 & 1.859 & \multicolumn{3}{c}{\textit{div.}} \\
 & & Score Correction & -- & 0.024 & 0.071 & 0.435 & 0.028 & 0.174 & 0.157 \\
\hline
    \end{tabular}
\end{table*}

\subsection{Nonlinear Schr\"odinger}\label{sec:nls}

Next, consider the cubic nonlinear Schr\"odinger (NLS)
equation in one space variable
\begin{equation}\label{eq:nls}
    \psi_t \;=\; i\bigl(\psi_{xx} + |\psi|^2 \psi\bigr),
    \qquad x \in [-20,\,80],\; t \in [0, T],
\end{equation}
with periodic boundary conditions and the
initial condition
\begin{equation}\label{eq:nls_ic}
    \psi_0(x) \;=\; e^{ix/2}\,\mathrm{sech}\!\left(\tfrac{x}{\sqrt 2}\right)
                  + e^{i(x-25)/20}\,\mathrm{sech}\!\left(\tfrac{x-25}{\sqrt 2}\right).
\end{equation}
The continuum dynamics conserves the norm
and the integrated Hamiltonian,
\begin{equation}\label{eq:nls_invariants}
    H(\psi, \bar\psi) \;=\; \psi_x\,\bar\psi_x \;-\; \tfrac{1}{2}\,\psi^2\,\bar\psi^2.
\end{equation}

We use a popular splitting method for \Eqref{eq:nls}. Given the
state $\psi_j^{n-1}$ at the previous step, one timestep advances in two
stages.

\textbf{(1) Linear PDE step.} For the constant-coefficient PDE $\psi_t = i\,\psi_{xx}$, apply standard 3-point
centered spatial discretization and implicit midpoint in time: setting
$v_j^{n-1} := \psi_j^{n-1}$, solve for $v_j^n$
\begin{equation}\label{eq:nls_lin}
\begin{split}
  \frac{v_j^n - v_j^{n-1}}{\Delta t} &= \frac{i}{2\,\Delta x^2} \Bigl[\bigl(v_{j+1}^n - 2\,v_j^n + v_{j-1}^n\bigr) \\
  &\quad + \bigl(v_{j+1}^{n-1} - 2\,v_j^{n-1} + v_{j-1}^{n-1}\bigr)\Bigr].
\end{split}
\end{equation}

\textbf{(2) Nonlinear ODE step.} For the other half of the PDE, $\psi_t = i\,| \psi |^2\,\psi$,
the exact pointwise solution is the phase rotation
\begin{equation}\label{eq:nls_nl}
    \psi_j^n \;=\; v_j^n\,\exp\!\bigl(i\,\Delta t\,|v_j^n|^2\bigr).
\end{equation}

To ensure stability of this symplectic splitting scheme, the time step must be appropriately limited, even though the implicit midpoint method is unconditionally stable in time \citep{faou12,wehe,lubich08}. We show next that our score-based scheme allows usage of larger time steps with clean results.

Training samples are produced by integrating the splitting
scheme using \Eqref{eq:nls_lin} and \Eqref{eq:nls_nl} on a fine grid
$(\Delta t_{\mathrm{fine}}, \Delta x_{\mathrm{fine}}) = (10^{-3},\,0.01)$
from the initial condition~\Eqref{eq:nls_ic}, then
block-averaged to the inference grid, 
and a correction step size $\eta = 1{\times}10^{-3}$.
Table~\ref{tab:nls_marginal} reports the relative errors for
the baseline and the score-corrected scheme, and we additionally provide a visualization of out method in the Appendix.



\subsection{Burgers' Equation}\label{sec:burgers_pdebench}
Consider the following PDE:
\begin{equation}\label{eq:burgers_pde}
    u_t \;+\; u\,u_x \;=\; \nu\,u_{xx},
    \qquad x \in [0, L],\ t \in [0, T],
\end{equation}
with periodic boundary conditions, $L = 1$, $T = 2$, and viscosity \(\nu > 0\).
We follow PDEBench \citep{PDEBench2022} formulation of this problem, and employ a correction step size of $0.01$,
and use as baseline integrator $G_{\Delta t}$ the well known forward-time centered-space
(FTCS) scheme on $N = 256$ cells. 

The training samples are PDEBench fine-grid Burgers snapshots on a $N=1024$ points grid, spatially downsampled by uniform striding to the
inference resolution $N = 256$. We report the
mean $L^1$, relative
$L^2$, and $L^\infty$ errors against the reference, evaluated at two times:
an early time $t = 0.05$, before baseline blow-up, and the active-window end $t = 0.5$.

Table~\ref{tab:burgers_stability} reports the score correction results and
in the Appendix we present   the corresponding space-time evolution. We observe that using score correction enables the stabilization of the baseline numerical method.

\section{Conclusion}

We have presented a score-based stabilization framework that addresses the fundamental trade-off between stability and computational efficiency in time-stepping schemes for PDEs. By reformulating stabilization as a two-stage procedure and applying a learned score correction after provisional numerical updates, we mitigate nonphysical behaviors and numerical instabilities that can arise in standard integrators.
We further show that the resulting scheme is stable, enabling robust long-time integration without restrictive timestep constraints. This stability property is a direct consequence of the contraction behavior of the proposed stabilization operator toward the set of physically admissible states.


We evaluated our method on a range of benchmark PDEs, including the 1D advection equation, the Korteweg-de Vries (KdV) equation, the nonlinear Schrödinger (NLS) equation, and Burgers' equation. Across our experiments, the proposed stabilization consistently improves robustness and preserves the qualitative structure of the underlying dynamics, avoiding the consistent solution-smoothing artifacts such as artificial diffusion that typically compromise classical stabilization schemes. This work opens the door to using score-based corrections as a general stabilization layer for marginally stable numerical simulation.

\bibliography{biblio}

@article{Ascher2004Multisymplectic,
  author  = {Ascher, Uri M. and McLachlan, Robert I.},
  title   = {Multisymplectic box schemes and the {K}orteweg--de {V}ries equation},
  journal = {Applied Numerical Mathematics},
  volume  = {48},
  number  = {3--4},
  pages   = {255--269},
  year    = {2004},
  doi     = {10.1016/S0168-9274(03)00154-5}
}

@article{Ascher2005Symplectic,
  author  = {Ascher, Uri M. and McLachlan, Robert I.},
  title   = {On symplectic and multisymplectic schemes for the {K}d{V} equation},
  journal = {Journal of Scientific Computing},
  volume  = {25},
  number  = {1--2},
  pages   = {83--104},
  year    = {2005},
  doi     = {10.1007/s10915-004-4634-6}
}

@inproceedings{Song2021ScoreBasedSDE,
  author    = {Song, Yang and Sohl-Dickstein, Jascha and Kingma, Diederik P. and Kumar, Abhishek and Ermon, Stefano and Poole, Ben},
  title     = {Score-Based Generative Modeling through Stochastic Differential Equations},
  booktitle = {International Conference on Learning Representations},
  year      = {2021}
}

@article{Hyvarinen2005ScoreMatching,
  author  = {Hyv{\"a}rinen, Aapo},
  title   = {Estimation of Non-Normalized Statistical Models by Score Matching},
  journal = {Journal of Machine Learning Research},
  volume  = {6},
  pages   = {695--709},
  year    = {2005}
}

@article{Efron2011Tweedie,
  author  = {Efron, Bradley},
  title   = {Tweedie's Formula and Selection Bias},
  journal = {Journal of the American Statistical Association},
  volume  = {106},
  number  = {496},
  pages   = {1602--1614},
  year    = {2011},
  doi     = {10.1198/jasa.2011.tm11181}
}

@Book{faou12,
  Title = {Geometric Numerical Integration and Schr\"odinger Equations},
  Author = {Erwan Faou},
  Publisher = {European Mathematical Society},
  note = {Zurich Lectures in Advanced Mathematics},
  Year= {2012}
}

@inproceedings{PDEBench2022,
author = {Takamoto, Makoto and Praditia, Timothy and Leiteritz, Raphael and MacKinlay, Dan and Alesiani, Francesco and Pflüger, Dirk and Niepert, Mathias},
title = {{PDEBench: An Extensive Benchmark for Scientific Machine Learning}},
year = {2022},
booktitle = {36th Conference on Neural Information Processing Systems (NeurIPS 2022) Track on Datasets and Benchmarks},
url = {https://arxiv.org/abs/2210.07182}
}

@book{sulem2007nonlinear,
  title={The nonlinear Schr{\"o}dinger equation: self-focusing and wave collapse},
  author={Sulem, Catherine and Sulem, Pierre-Louis},
  volume={139},
  year={2007},
  publisher={Springer Science \& Business Media}
}

@article{burgers1948mathematical,
  title={A mathematical model illustrating the theory of turbulence},
  author={Burgers, Johannes Martinus},
  journal={Advances in applied mechanics},
  volume={1},
  pages={171--199},
  year={1948},
  publisher={Elsevier}
}

@ARTICLE{wehe,
       AUTHOR= {J.A.C. Weideman and B.M. Herbst},
       TITLE = {Split-step methods for the solution of the nonlinear
{S}chr\"odinger equation},
       journal = {SIAM J. Numer. Anal.},
       volume = {23},
       pages = {485-507},
       year = {1986}
}

@ARTICLE{lubich08,
       AUTHOR= {C. Lubich},
       TITLE = {On splitting methods for {S}chr\"odinger-{P}oisson and
cubic nonlinear {S}chr\"odinger equations},
       journal = {Math. Comp.},
       volume = {77},
       pages = {2141-2153},
       year = {2008}
}

@inproceedings{luo2021score,
  title={Score-based point cloud denoising},
  author={Luo, Shitong and Hu, Wei},
  booktitle={Proceedings of the IEEE/CVF international conference on computer vision},
  pages={4583--4592},
  year={2021}
}

@inproceedings{ronneberger2015u,
  title={U-net: Convolutional networks for biomedical image segmentation},
  author={Ronneberger, Olaf and Fischer, Philipp and Brox, Thomas},
  booktitle={International Conference on Medical image computing and computer-assisted intervention},
  pages={234--241},
  year={2015},
  organization={Springer}
}

@book{hlw,
  author =      "E. Hairer and C. Lubich and G. Wanner",
  title =       "Geometric Numerical Integration",
  publisher =   "Springer",
  year =        "2002"
  }

@article{gottlieb2001spectral,
  title={Spectral methods for hyperbolic problems},
  author={Gottlieb, David and Hesthaven, Jan S},
  journal={Journal of Computational and Applied Mathematics},
  volume={128},
  number={1-2},
  pages={83--131},
  year={2001},
  publisher={Elsevier}
}

@article{zakr,
         author = {N. J. Zabusky and M. D. Kruskal},
         title = {Interaction of 'solitons' in a collisionless
                  plasma and the recurrence of initial states},
  journal={Physical review letters},
         volume = {15},
         year = {1965},
         pages = {240-243}
}

@article{kingma2014adam,
  title={Adam: A method for stochastic optimization},
  author={Kingma, Diederik P and Ba, Jimmy},
  journal={arXiv preprint arXiv:1412.6980},
  year={2014}
}

@book{leveque2002finite,
  title={Finite volume methods for hyperbolic problems},
  author={LeVeque, Randall J},
  volume={31},
  year={2002},
  publisher={Cambridge university press}
}

@article{harten1997high,
  title={High resolution schemes for hyperbolic conservation laws},
  author={Harten, Ami},
  journal={Journal of computational physics},
  volume={135},
  number={2},
  pages={260--278},
  year={1997},
  publisher={Elsevier}
}

@article{godunov1959difference,
  title={A difference scheme for numerical solution of discontinuous solution of hydrodynamic equations},
  author={Godunov, Sergei Konstantinovich},
  journal={Math. Sbornik},
  volume={47},
  pages={271--306},
  year={1959}
}

@article{raissi2019physics,
  title={Physics-informed neural networks: A deep learning framework for solving forward and inverse problems involving nonlinear partial differential equations},
  author={Raissi, Maziar and Perdikaris, Paris and Karniadakis, George E},
  journal={Journal of Computational physics},
  volume={378},
  pages={686--707},
  year={2019},
  publisher={Elsevier}
}

@article{miyatake2025quantifying,
  title={Quantifying uncertainty in the numerical integration of evolution equations based on Bayesian isotonic regression},
  author={Miyatake, Yuto and Irie, Kaoru and Matsuda, Takeru},
  journal={Japan Journal of Industrial and Applied Mathematics},
  volume={42},
  number={3},
  pages={983--1001},
  year={2025},
  publisher={Springer}
}

@article{maccormack2003effect,
  title={The effect of viscosity in hypervelocity impact cratering},
  author={MacCormack, Robert W},
  journal={Journal of spacecraft and rockets},
  volume={40},
  number={5},
  pages={757--763},
  year={2003}
}

@article{toyota2025joint,
  title={Joint Bayesian Inference of Parameter and Discretization Error Uncertainties in ODE Models},
  author={Toyota, Shoji and Miyatake, Yuto},
  journal={arXiv preprint arXiv:2511.23010},
  year={2025}
}

@article{matsuda2021estimation,
  title={Estimation of ordinary differential equation models with discretization error quantification},
  author={Matsuda, Takeru and Miyatake, Yuto},
  journal={SIAM/ASA Journal on Uncertainty Quantification},
  volume={9},
  number={1},
  pages={302--331},
  year={2021},
  publisher={SIAM}
}

@article{de2022riemannian,
  title={Riemannian score-based generative modelling},
  author={De Bortoli, Valentin and Mathieu, Emile and Hutchinson, Michael and Thornton, James and Teh, Yee Whye and Doucet, Arnaud},
  journal={Advances in neural information processing systems},
  volume={35},
  pages={2406--2422},
  year={2022}
}

@article{kochkov2021machine,
  title={Machine learning--accelerated computational fluid dynamics},
  author={Kochkov, Dmitrii and Smith, Jamie A and Alieva, Ayya and Wang, Qing and Brenner, Michael P and Hoyer, Stephan},
  journal={Proceedings of the National Academy of Sciences},
  volume={118},
  number={21},
  pages={e2101784118},
  year={2021},
  publisher={National Academy of Sciences}
}

@article{bar2019learning,
  title={Learning data-driven discretizations for partial differential equations},
  author={Bar-Sinai, Yohai and Hoyer, Stephan and Hickey, Jason and Brenner, Michael P},
  journal={Proceedings of the National Academy of Sciences},
  volume={116},
  number={31},
  pages={15344--15349},
  year={2019},
  publisher={National Academy of Sciences}
}

@article{meng2021sdedit,
  title={Sdedit: Guided image synthesis and editing with stochastic differential equations},
  author={Meng, Chenlin and He, Yutong and Song, Yang and Song, Jiaming and Wu, Jiajun and Zhu, Jun-Yan and Ermon, Stefano},
  journal={arXiv preprint arXiv:2108.01073},
  year={2021}
}

@article{jiang1996efficient,
  title={Efficient implementation of weighted ENO schemes},
  author={Jiang, Guang-Shan and Shu, Chi-Wang},
  journal={Journal of computational physics},
  volume={126},
  number={1},
  pages={202--228},
  year={1996},
  publisher={Elsevier}
}

@article{vonneumann1950method,
  title={A method for the numerical calculation of hydrodynamic shocks},
  author={VonNeumann, John and Richtmyer, Robert D},
  journal={Journal of applied physics},
  volume={21},
  number={3},
  pages={232--237},
  year={1950},
  publisher={American Institute of Physics}
}

@article{boris1973flux,
  title={Flux-corrected transport. I. SHASTA, a fluid transport algorithm that works},
  author={Boris, Jay P and Book, David L},
  journal={Journal of computational physics},
  volume={11},
  number={1},
  pages={38--69},
  year={1973},
  publisher={Elsevier}
}

@book{lax1973hyperbolic,
  title={Hyperbolic systems of conservation laws and the mathematical theory of shock waves},
  author={Lax, Peter D},
  year={1973},
  publisher={SIAM}
}

@book{leveque1992numerical,
	Author = {R.J. LeVeque},
	Publisher = {Birkhauser},
	Title = {Numerical Methods for Conservation Laws},
	Year = {1990}}

@article{tb,
	Author = {P. Tanarkit and L. Biegler},
	Journal = {Ind. Eng. Chem. Res.},
	Pages = {1253--1266},
	Title = {Stable decomposition for dynamic optimization},
	Volume = {34},
	Year = {1995}}

@book{rm,
	Author = {R.D. Richtmyer and K.W. Morton},
	Publisher = {Wiley},
	Title = {Difference Methods for Initial-Value Problems},
	Year = {1967}}

@book{pt,
	Author = {Peyret, R. and T. Taylor},
	Publisher = {Spring er-Verlag},
	Title = {Computational Methods for Fluid Flow},
	Year = {1983}}

@article{ho2020denoising,
  title={Denoising diffusion probabilistic models},
  author={Ho, Jonathan and Jain, Ajay and Abbeel, Pieter},
  journal={Advances in neural information processing systems},
  volume={33},
  pages={6840--6851},
  year={2020}
}


\onecolumn
\appendix

\setcounter{theorem}{0}
\setcounter{lemma}{0}

\renewcommand{\thetheorem}{\Alph{section}.\arabic{theorem}}
\renewcommand{\thelemma}{\Alph{section}.\arabic{lemma}}

\section{Proofs of Theoretical Results}
We provide the complete proofs for the results presented in \Cref{sec:math_analysis}.
Let $X$ be a Hilbert space and let $\mathcal{M}\subset X$ denote the
manifold of physically admissible states.

\begin{lemma}[Score correction preserves order of accuracy]
Let $u(t_{j+1})\in\mathcal{M}$ be the exact solution and assume the provisional
update satisfies
\[
\|\widehat{u}_{j+1}-u(t_{j+1})\|_X \le Ch^p .
\]
Let the correction operator be
\[
R(u) = u + \eta s(u),
\]
or more generally a finite composition of such maps. Assume that $R$ is
Lipschitz-continuous on a neighborhood of $\mathcal{M}$, with Lipschitz
constant $L_R$ independent of $h$, and satisfies
\[
R(u)=u
\qquad \text{for all } u\in\mathcal{M}.
\]
Define $u_{j+1}=R(\widehat{u}_{j+1})$. Then, for sufficiently small $h$,
\[
\|u_{j+1}-u(t_{j+1})\|_X \le L_R C h^p .
\]
Thus, the correction does not reduce the order of accuracy.
\end{lemma}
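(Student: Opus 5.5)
The argument uses only the fixed-point property of $R$ on $\mathcal{M}$ and its Lipschitz continuity near $\mathcal{M}$. Since the exact solution satisfies $u(t_{j+1})\in\mathcal{M}$, the hypothesis $R(u)=u$ on $\mathcal{M}$ gives $R(u(t_{j+1}))=u(t_{j+1})$. The corrected error can then be written as a difference of two values of $R$:
\[
u_{j+1}-u(t_{j+1}) = R(\widehat{u}_{j+1}) - R\bigl(u(t_{j+1})\bigr).
\]
Applying the Lipschitz bound gives $\|u_{j+1}-u(t_{j+1})\|_X \le L_R\|\widehat{u}_{j+1}-u(t_{j+1})\|_X \le L_R C h^p$. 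Because $L_R$ is independent of $h$, the exponent $p$ is unchanged, so the order of accuracy is preserved.

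The one step that needs care is checking that the Lipschitz estimate may be applied to this pair of points. The bound on $R$ holds only on some neighborhood $\mathcal{U}$ of $\mathcal{M}$. I would take $\mathcal{U}$ to contain a uniform tube $\{v:\operatorname{dist}_X(v,\mathcal{M})<\delta\}$ with $\delta>0$ fixed and independent of $h$. The point $u(t_{j+1})$ lies in $\mathcal{M}\subset\mathcal{U}$. For the provisional update, $\operatorname{dist}_X(\widehat{u}_{j+1},\mathcal{M})\le\|\widehat{u}_{j+1}-u(t_{j+1})\|_X\le Ch^p$, and this is less than $\delta$ once $h<(\delta/C)^{1/p}$. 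This is the meaning of the phrase ``for sufficiently small $h$'' in the statement.

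A subtlety is that the Lipschitz estimate needs the segment between the two points to stay in $\mathcal{U}$, or else $\mathcal{U}$ must be a set on which $R$ is Lipschitz for all pairs of points. I would read the hypothesis in the second sense, as Lipschitz on the whole neighborhood. Alternatively, I would shrink to the ball of radius $Ch^p$ about $u(t_{j+1})$, which is convex and lies in the tube for small $h$, and apply the bound there.

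For the general case where $R=R_m\circ\cdots\circ R_1$, each $R_i$ fixes $\mathcal{M}$ and is Lipschitz near it. I would induct on $m$: each intermediate image stays within $(\prod_{i\le k}L_{R_i})Ch^p$ of $u(t_{j+1})$, hence within the tube for small $h$. The constant $L_R$ is then the product of the $L_{R_i}$, which is still independent of $h$. I expect no real obstacle; the proof is essentially the two-line estimate above together with this neighborhood bookkeeping.
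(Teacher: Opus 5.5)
Your proof is correct and matches the paper's argument: use $R(u(t_{j+1}))=u(t_{j+1})$ to write the error as $R(\widehat{u}_{j+1})-R(u(t_{j+1}))$, then apply the Lipschitz bound to get $L_R C h^p$. Your neighborhood bookkeeping (checking that $\widehat{u}_{j+1}$ lies in the Lipschitz region once $Ch^p<\delta$) makes explicit what the paper leaves implicit in ``for sufficiently small $h$.'' The induction over the factors of a composition is unnecessary, because the hypotheses are imposed on $R$ itself; it would also require each factor to fix $\mathcal{M}$, which the statement does not assume.
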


\begin{proof}
Since $R(u(t_{j+1}))=u(t_{j+1})$, we have
\begin{equation}
\begin{split}
\|u_{j+1}-u(t_{j+1})\|_X
&=
\|R(\widehat{u}_{j+1})-R(u(t_{j+1}))\|_X \\
&\le
L_R\|\widehat{u}_{j+1}-u(t_{j+1})\|_X \\
&\le
L_R C h^p .
\end{split}
\end{equation}
\end{proof}

\begin{lemma}[One-step distance bound under the numerical step]
Let $G_h$ be a one-step numerical method satisfying
\[
\|G_h(u)-\Phi_h(u)\|_X \le Ch^{q},
\qquad u\in\mathcal{M},
\]
where $\Phi_h$ is the exact flow. Further, assume that $G_h$ is
Lipschitz-continuous on a neighborhood of $\mathcal{M}$:
\[
\|G_h(x)-G_h(y)\|_X \le L_h \|x-y\|_X.
\]
Assume also that $\Phi_h$ leaves $\mathcal{M}$ invariant, that is,
\[
\Phi_h(\mathcal{M})\subseteq\mathcal{M}.
\]
Then, for any $x\in X$ with
$\operatorname{dist}_X(x,\mathcal{M})\le \varepsilon$, where the
$\varepsilon$-neighborhood of $\mathcal{M}$ lies in the neighborhood
on which the Lipschitz estimate holds,
\[
\operatorname{dist}_X(G_h(x),\mathcal{M})
\le L_h\varepsilon+Ch^{q}.
\]
\end{lemma}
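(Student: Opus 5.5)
The plan is to reduce the distance of $G_h(x)$ from $\mathcal{M}$ to a distance between $G_h(x)$ and one well-chosen point of $\mathcal{M}$. That point is the exact flow applied to a near-nearest point of $x$. Concretely, I would fix $x$ with $\operatorname{dist}_X(x,\mathcal{M})\le\varepsilon$ and, for arbitrary $\delta>0$, choose $y\in\mathcal{M}$ with $\|x-y\|_X\le \varepsilon+\delta$. This avoids assuming that a nearest-point projection onto $\mathcal{M}$ exists or is unique, which we do not know for a general subset of a Hilbert space. By invariance, $\Phi_h(y)\in\mathcal{M}$, so
\[
\operatorname{dist}_X(G_h(x),\mathcal{M})\le \|G_h(x)-\Phi_h(y)\|_X .
\]

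Next I would split the right-hand side with the triangle inequality, inserting $G_h(y)$:
\[
\|G_h(x)-\Phi_h(y)\|_X\le \|G_h(x)-G_h(y)\|_X+\|G_h(y)-\Phi_h(y)\|_X .
\]
The first term is at most $L_h\|x-y\|_X\le L_h(\varepsilon+\delta)$ by the Lipschitz hypothesis. The second is at most $Ch^q$ by the local error hypothesis, which applies because $y\in\mathcal{M}$. Letting $\delta\to0$ then gives $\operatorname{dist}_X(G_h(x),\mathcal{M})\le L_h\varepsilon+Ch^q$.

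The only delicate point is justifying the Lipschitz estimate for the pair $x,y$. The hypothesis provides it on a neighborhood of $\mathcal{M}$ that contains the $\varepsilon$-neighborhood. Since $y\in\mathcal{M}$ and $x$ lies within $\varepsilon$ of $\mathcal{M}$, both points are in that region. To be safe with the $\delta$-slack, I would read the assumption as holding on a slightly larger, say open, neighborhood, or simply on the closed $\varepsilon$-neighborhood, which already contains both $x$ and $y$. If the Lipschitz bound is only available along segments inside a convex set, I would note that the statement asserts it for any two points of the neighborhood, so no path argument is needed. With this settled, the rest of the argument is routine.
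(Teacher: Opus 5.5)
Your proposal is correct and matches the paper's proof essentially line for line: choose $y\in\mathcal{M}$ with $\|x-y\|_X\le\varepsilon+\delta$, use invariance to bound the distance by $\|G_h(x)-\Phi_h(y)\|_X$, insert $G_h(y)$ and apply the Lipschitz and local-error bounds, then let $\delta\to0$. Your concern about the $\delta$-slack is unnecessary, since $x$ and $y\in\mathcal{M}$ both lie in the closed $\varepsilon$-neighborhood whatever $\delta$ is.
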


\begin{proof}
For any $\delta>0$, choose $u\in\mathcal{M}$ such that
\[
\|x-u\|_X
\le \operatorname{dist}_X(x,\mathcal{M})+\delta
\le \varepsilon+\delta.
\]
Since $\Phi_h(u)\in\mathcal{M}$, we have
\[
\operatorname{dist}_X(G_h(x),\mathcal{M})
\le \|G_h(x)-\Phi_h(u)\|_X.
\]
Furthermore,
\begin{equation}
\begin{split}
\|G_h(x)-\Phi_h(u)\|_X
&\le
\|G_h(x)-G_h(u)\|_X
+\|G_h(u)-\Phi_h(u)\|_X \\
&\le
L_h\|x-u\|_X+Ch^q \\
&\le
L_h(\varepsilon+\delta)+Ch^q.
\end{split}
\end{equation}
Letting $\delta\to0$ completes the proof.
\end{proof}

\begin{lemma}[Score-based stabilization contracts toward a manifold]
Assume there exists an energy functional
$E:X\to\mathbb{R}_{\ge 0}$ such that
\[
E(u)=0 \iff u\in\mathcal{M},
\qquad
\nabla E(u)=0
\quad \text{for all } u\in\mathcal{M}.
\]
Assume further that, on the sublevel set
\[
\mathcal{N}=\{u\in X:E(u)\le E_{\max}\},
\]
there exist constants $c_-,c_+>0$ such that
\[
c_-\,\operatorname{dist}_X(u,\mathcal{M})^2
\le E(u)
\le
c_+\,\operatorname{dist}_X(u,\mathcal{M})^2.
\]

Assume the stabilization operator is gradient-based:
\[
R=(\operatorname{Id}-\eta\nabla E)^{\circ K},
\]
with $0<\eta\le 1/L$, where $E$ has an
$L$-Lipschitz-continuous gradient and satisfies the PL inequality
\[
\|\nabla E(u)\|_X^2 \ge 2\mu E(u)
\]
on $\mathcal{N}$. Then, for all $u\in\mathcal{N}$,
\[
E(R(u))\le (1-\mu\eta)^K E(u).
\]
Consequently,
\[
\operatorname{dist}_X(R(u),\mathcal{M})
\le
\sqrt{\frac{c_+}{c_-}}\,
(1-\mu\eta)^{K/2}
\operatorname{dist}_X(u,\mathcal{M}).
\]
\end{lemma}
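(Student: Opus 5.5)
The plan is to prove the single-step estimate $E(u-\eta\nabla E(u))\le(1-\mu\eta)E(u)$ for $u\in\mathcal{N}$, then iterate it $K$ times, and finally convert the energy decay into a distance bound using the two-sided comparison between $E$ and $\operatorname{dist}_X(\cdot,\mathcal{M})^2$. This is the standard Polyak--Łojasiewicz argument for gradient descent, transported to the Hilbert space $X$.

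For the one-step estimate, set $u^+=u-\eta\nabla E(u)$. I will use the descent lemma implied by $L$-Lipschitz continuity of $\nabla E$, namely
\[
E(u^+)\le E(u)-\eta\|\nabla E(u)\|_X^2+\tfrac{L\eta^2}{2}\|\nabla E(u)\|_X^2 .
\]
With $\eta\le 1/L$ this gives $E(u^+)\le E(u)-\tfrac{\eta}{2}\|\nabla E(u)\|_X^2$. The PL inequality then yields $E(u^+)\le(1-\mu\eta)E(u)$. Here $1-\mu\eta\ge 0$, because the descent lemma combined with $E\ge0$ forces $\mu\le L$.

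The main obstacle is the iteration. The descent lemma has to hold along the segment $[u,u^+]$, and PL has to hold at each iterate, so I need every iterate to remain in $\mathcal{N}$. I would handle this as follows.
\begin{itemize}
\item \textbf{Assumed domain.} I assume, as is implicit in the statement, that $\nabla E$ is $L$-Lipschitz on all of $X$, or at least on a convex set containing $\mathcal{N}$ and its one-step images. The descent lemma then holds regardless of where the segment lies.
\item \textbf{Invariance of $\mathcal{N}$.} The one-step bound gives $E(u^+)\le E(u)\le E_{\max}$, so $u^+\in\mathcal{N}$. Hence $\mathcal{N}$ is forward invariant under $\operatorname{Id}-\eta\nabla E$.
\item \textbf{Induction.} By induction on the step count, all $K$ iterates lie in $\mathcal{N}$. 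Applying the one-step bound $K$ times gives $E(R(u))\le(1-\mu\eta)^K E(u)$.
\end{itemize}
If the Lipschitz property is only available on $\mathcal{N}$ itself, one needs a short continuity argument along the segment $t\mapsto u-t\eta\nabla E(u)$. Since $\frac{d}{dt}E$ along the segment is nonpositive while the segment stays where the gradient estimate holds, the sublevel set cannot be exited. I would note this caveat rather than belabor it.

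Finally, since $R(u)\in\mathcal{N}$ and $u\in\mathcal{N}$, the comparison bounds give
\[
c_-\operatorname{dist}_X(R(u),\mathcal{M})^2\le E(R(u))\le(1-\mu\eta)^K E(u)\le (1-\mu\eta)^K c_+\operatorname{dist}_X(u,\mathcal{M})^2 .
\]
Taking square roots yields the stated contraction with factor $\rho_K=\sqrt{c_+/c_-}\,(1-\mu\eta)^{K/2}$.
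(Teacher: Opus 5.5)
Your proposal is correct and follows the paper's proof essentially step for step: the descent lemma with $\eta\le 1/L$, then the PL inequality to get the one-step factor $1-\mu\eta$, monotonicity of $E$ to keep every iterate in $\mathcal{N}$, $K$-fold iteration, and finally the two-sided comparison between $E$ and $\operatorname{dist}_X(\cdot,\mathcal{M})^2$. Your side remarks make explicit two points the paper's argument passes over silently: why $1-\mu\eta\ge 0$, and where the Lipschitz bound for $\nabla E$ must hold along the segment $[u,u^+]$.
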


\begin{proof}
Let
\[
u_{k+1}=u_k-\eta\nabla E(u_k),
\qquad u_0=u.
\]
The standard descent lemma for $L$-smooth functions gives
\[
E(u_{k+1})
\le
E(u_k)
-\eta\Bigl(1-\tfrac{L\eta}{2}\Bigr)
\|\nabla E(u_k)\|_X^2.
\]
For $\eta\le 1/L$,
\[
E(u_{k+1})
\le
E(u_k)-\tfrac{\eta}{2}\|\nabla E(u_k)\|_X^2.
\]
Applying the PL inequality yields
\[
E(u_{k+1})
\le
(1-\mu\eta)E(u_k).
\]
Since $E$ is nonincreasing along the iteration, every iterate remains
in $\mathcal{N}$, so the descent and PL bounds apply at each step.
Iterating $K$ times gives
\[
E(R(u))\le (1-\mu\eta)^K E(u).
\]

Finally, using the lower and upper bounds relating $E$ to the distance
from $\mathcal{M}$,
\[
\begin{split}
\operatorname{dist}_X(R(u),\mathcal{M})^2
&\le \frac{1}{c_-}E(R(u)) \\
&\le
\frac{1}{c_-}(1-\mu\eta)^K E(u) \\
&\le
\frac{c_+}{c_-}(1-\mu\eta)^K
\operatorname{dist}_X(u,\mathcal{M})^2.
\end{split}
\]
Taking square roots completes the proof.
\end{proof}

\begin{theorem}[Conditional Manifold Stability]
Let
\[
r_j:=\operatorname{dist}_X(u_j,\mathcal{M})
\]
denote the constraint residual, and let $\varepsilon>0$. Let $\rho_K$ be
the contraction factor from Lemma~3. Assume that whenever
$r_j\le\varepsilon$, the predictor stays in the valid basin of attraction
of the score correction and satisfies
\[
r_{\widehat{j+1}}
\le L_h\varepsilon+Ch^q,
\qquad
r_{\widehat{j+1}}
:=
\operatorname{dist}_X(\widehat{u}_{j+1},\mathcal{M}).
\]
Assume further that
\[
\rho_K\bigl(L_h\varepsilon+Ch^q\bigr)\le\varepsilon
\]
and that $r_0\le\varepsilon$. Then the corrected scheme satisfies
\[
r_j\le\varepsilon
\qquad\text{for all } j\ge0.
\]
In particular, the bound is independent of the number of time steps.
\end{theorem}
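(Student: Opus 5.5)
The argument is an induction on $j$ with the invariant $r_j\le\varepsilon$. The base case $r_0\le\varepsilon$ is assumed. Each inductive step is one predictor--corrector cycle. Lemma~2 (or the hypothesis it justifies) controls how far the predictor moves off $\mathcal{M}$, and Lemma~3 controls how far the correction pulls it back. Throughout, $\rho_K:=\sqrt{c_+/c_-}\,(1-\mu\eta)^{K/2}$ denotes the factor from Lemma~3, so that $\operatorname{dist}_X(R(u),\mathcal{M})\le\rho_K\operatorname{dist}_X(u,\mathcal{M})$ for every $u$ in the basin $\mathcal{N}$.

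For the inductive step, suppose $r_j\le\varepsilon$. The hypothesis of the theorem then gives two facts. First, the provisional state $\widehat{u}_{j+1}=G_h(u_j)$ satisfies $r_{\widehat{j+1}}\le L_h\varepsilon+Ch^q$; this is exactly the conclusion of Lemma~2 applied at $x=u_j$. Second, $\widehat{u}_{j+1}$ lies in the valid basin of attraction, which I read as $\widehat{u}_{j+1}\in\mathcal{N}$, the sublevel set on which Lemma~3 holds. Since $u_{j+1}=R(\widehat{u}_{j+1})$, Lemma~3 yields
\[
r_{j+1}=\operatorname{dist}_X(R(\widehat{u}_{j+1}),\mathcal{M})\le\rho_K\,r_{\widehat{j+1}}\le\rho_K\bigl(L_h\varepsilon+Ch^q\bigr)\le\varepsilon,
\]
where the last inequality is the assumed compatibility condition. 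This closes the induction, so $r_j\le\varepsilon$ for all $j\ge0$. The bound does not depend on $j$ because the same $\varepsilon$ reproduces itself at every step, and no constants accumulate.

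The only delicate point is making sure Lemma~3 really applies at every step, which requires two things. First, the hypothesis must place $\widehat{u}_{j+1}$ in $\mathcal{N}$, and the hypothesis is invoked only under $r_j\le\varepsilon$, which is exactly what the induction maintains. Second, the conclusion of Lemma~3 must be well defined, since it only needs the starting point in $\mathcal{N}$; the inner gradient iterates stay in $\mathcal{N}$ by monotonicity of $E$, as shown in the proof of Lemma~3. I will state explicitly that the basin assumption means membership in $\mathcal{N}$. I will also note that the statement does not require $u_{j+1}\in\mathcal{N}$, because the next step re-derives the predictor's basin membership from $r_{j+1}\le\varepsilon$ alone.
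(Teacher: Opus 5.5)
Your proposal is correct and follows essentially the same route as the paper's proof: induction on $j$, where the predictor hypothesis gives $r_{\widehat{j+1}}\le L_h\varepsilon+Ch^q$, Lemma~3 gives $r_{j+1}\le\rho_K r_{\widehat{j+1}}$, and the compatibility condition closes the step. Your explicit identifications of the basin with the sublevel set $\mathcal{N}$ and of $\rho_K$ with $\sqrt{c_+/c_-}\,(1-\mu\eta)^{K/2}$ make precise what the paper leaves implicit.
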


\begin{proof}
We argue by induction on $j$. The base case is the hypothesis
$r_0\le\varepsilon$. Assume that $r_j\le\varepsilon$. By the predictor
estimate,
\[
r_{\widehat{j+1}}
\le L_h\varepsilon+Ch^q,
\]
and the provisional state lies in the valid basin of attraction.
By Lemma~3, the correction satisfies
\[
r_{j+1}
\le \rho_K r_{\widehat{j+1}}.
\]
Therefore,
\[
r_{j+1}
\le
\rho_K\bigl(L_h\varepsilon+Ch^q\bigr)
\le\varepsilon.
\]
Thus $r_j\le\varepsilon$ implies $r_{j+1}\le\varepsilon$, and the claim
follows by induction.
\end{proof}


\paragraph{Remark on Trajectory Accuracy vs. Manifold Distance.}
While Theorem 1 guarantees that the corrected numerical state remains close to the admissible manifold $\mathcal{M}$ (bounding the distance $r_j$ over arbitrary time horizons), this does not automatically guarantee pointwise trajectory accuracy over long time-spans. Pointwise trajectory accuracy is fundamentally dependent on the local truncation error of the combined predictor-corrector steps. An unstable integrator that is stabilized by a score model will remain on the physical manifold, but could still suffer from phase drift or diverge from the true trajectory of the exact initial state if the underlying time step is too large. Thus, the proposed framework focuses on structural and qualitative stability rather than guaranteeing exact pointwise tracking over long times.

\section{Additional Reaults}
\subsection{Additional Advection Results}
\label{app:additional_advection}

We provide extended results for the 1D advection experiments. We examine the evolution of two initial conditions: a smooth Gaussian pulse and a discontinuous two-box profile, to further demonstrate the structure-preserving capabilities of the score-corrected scheme.

\subsubsection{Gaussian Pulse}
The smooth Gaussian pulse initial condition is defined as:
\begin{equation}
    u_0(x) = \exp\!\left(-\frac{(x-x_0)^2}{2\beta^2}\right).
\end{equation}
This smooth profile serves as a standard sanity check where peak and shape preservation can be tracked over time. As shown in Figure~\ref{fig:transport_gaussian_snapshots}, the score-based correction successfully maintains the pulse's peak amplitude without introducing the numerical decay.

\begin{figure}[htbp]
    \centering
    \includegraphics[width=0.6\linewidth]{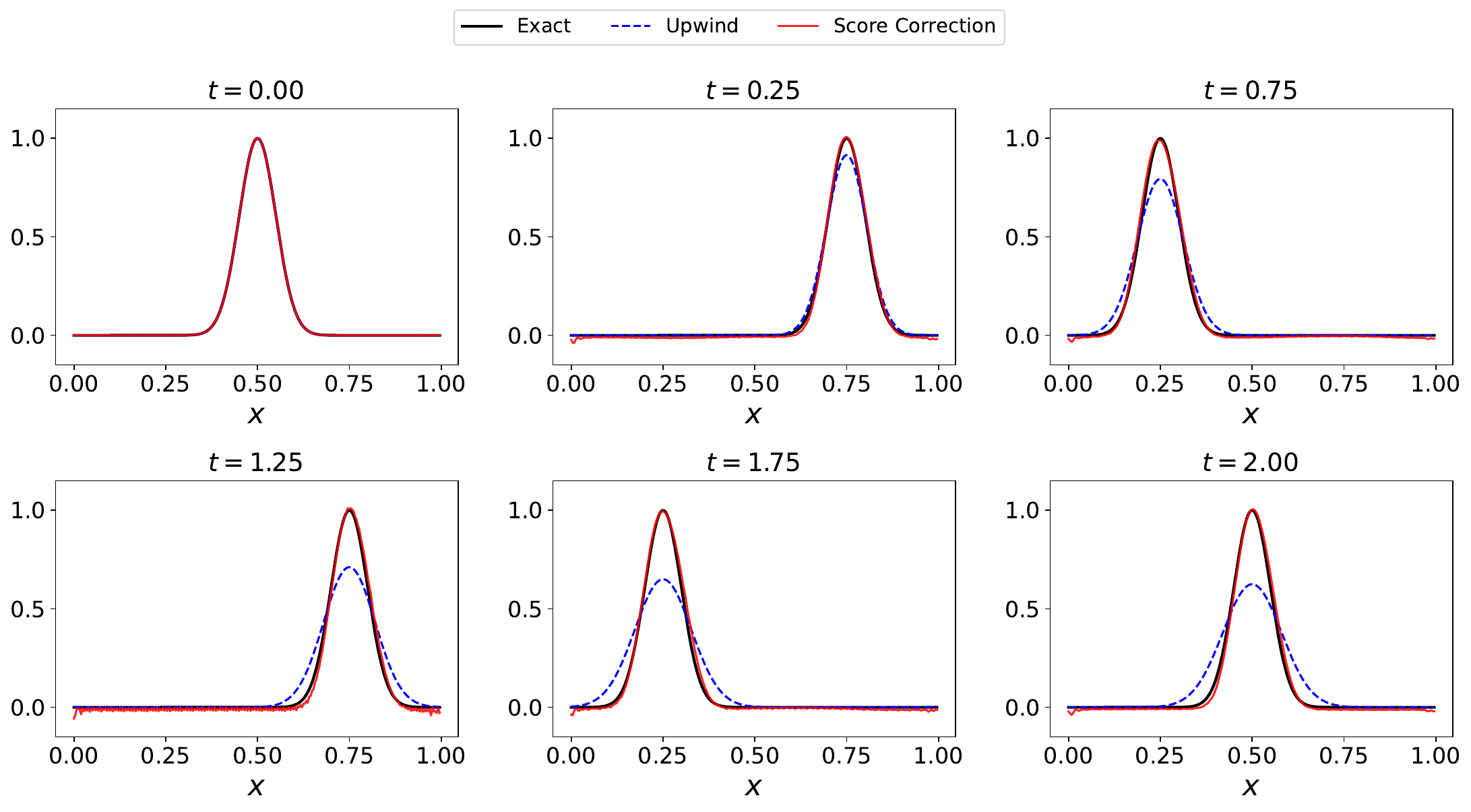}
    \caption{Gaussian pulse advection. The score-corrected method (red) does not exhibit decay, while the upwind method (blue, dashed) loses roughly $35\%$ of the peak by $t=2$.}
    \label{fig:transport_gaussian_snapshots}
\end{figure}

\subsubsection{Two-Box Pulse}
To test the method on multiple discontinuities, we utilize a two-box pulse defined by:
\begin{equation}
\begin{split}
    u_0(x) &= \mathbf{1}_{[x_1,\, x_1+w_+]}(x) \;-\; \mathbf{1}_{[x_2,\, x_2+w_-]}(x), \\
    &\quad x_2 > x_1 + w_+.
\end{split}
\end{equation}
A positive box ($+1$) and a negative box ($-1$) separated by a gap provide a stricter test of the numerical scheme. The score model must simultaneously maintain four sharp fronts, the correct plateau heights, and the proper gap between the pulses without generating spurious oscillations. Snapshots of this evolution are shown in Figure~\ref{fig:transport_twobox}.

\begin{figure}[htbp]
    \centering
    \includegraphics[width=0.6\linewidth]{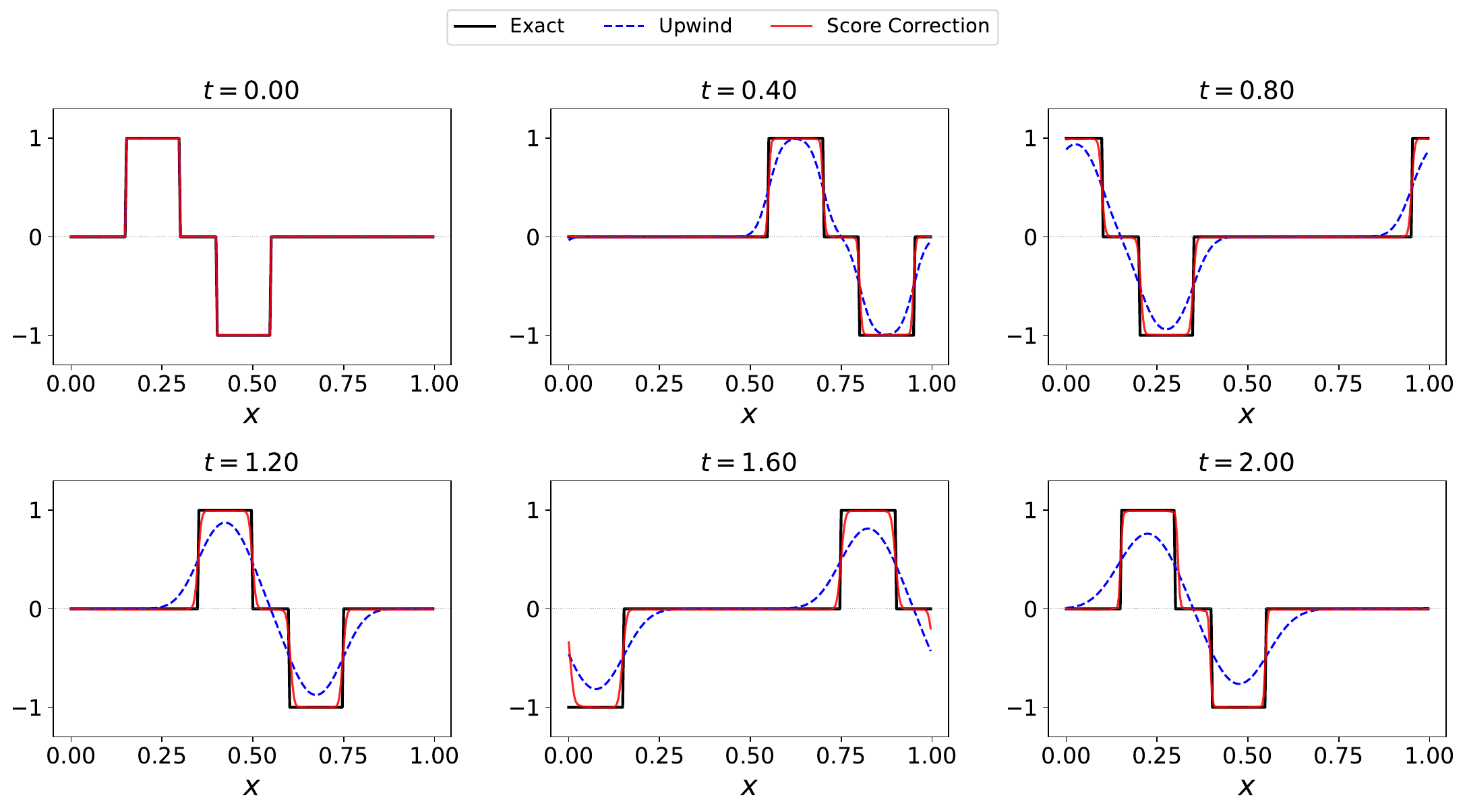}
    \caption{Two-box advection. The score-corrected method (red) preserves the amplitude and separation of both pulses.}
    \label{fig:transport_twobox}
\end{figure}

\subsubsection{Advection using PDEBench}\label{sec:advection_pdebench}

We use to the advection problem introduced in \Cref{subsec:phase1_advection}, using the version of this problem supplied by PDEBench \citep{PDEBench2022}. For the evaluation, we fix the domain length $L = 1$, the advection speed $a = 1$, and integrate to a final time $T = 2$, corresponding to two complete domain traversals. Initial conditions $u_0(x)$ are drawn from the PDEBench 1D-advection ensemble.
We reuse the first-order upwind discretization defined  in Section \ref{subsec:phase1_advection} as our baseline integrator $G_{\Delta t}$, operating here on an inference grid of $N = 256$ cells. 

Training samples are drawn directly from the PDEBench ensemble,
and a held-out ensemble of trajectories supplies the test samples.
To quantitatively evaluate performance, we compute the mean absolute error ($L^1$), the relative root-mean-square error (relative $L^2$), and the maximum pointwise error ($L^\infty$) across the spatial grid.

The score-based correction successfully stabilizes the numerical integration particularly in regimes where the baseline method fails. As detailed in \Cref{tab:adv_pdebench}, in the stable sub-critical regime, the corrected upwind scheme  reduces numerical diffusion compared to the baseline. In the supercritical regime where the baseline method diverges, the score correction successfully maintains long-term structural stability. Results are visualized in \Cref{fig:adv_paper_plot}, which demonstrates that the corrected scheme accurately tracks the analytical reference profile.

\begin{figure}[h]
    \centering
    \includegraphics[width=.7\linewidth]{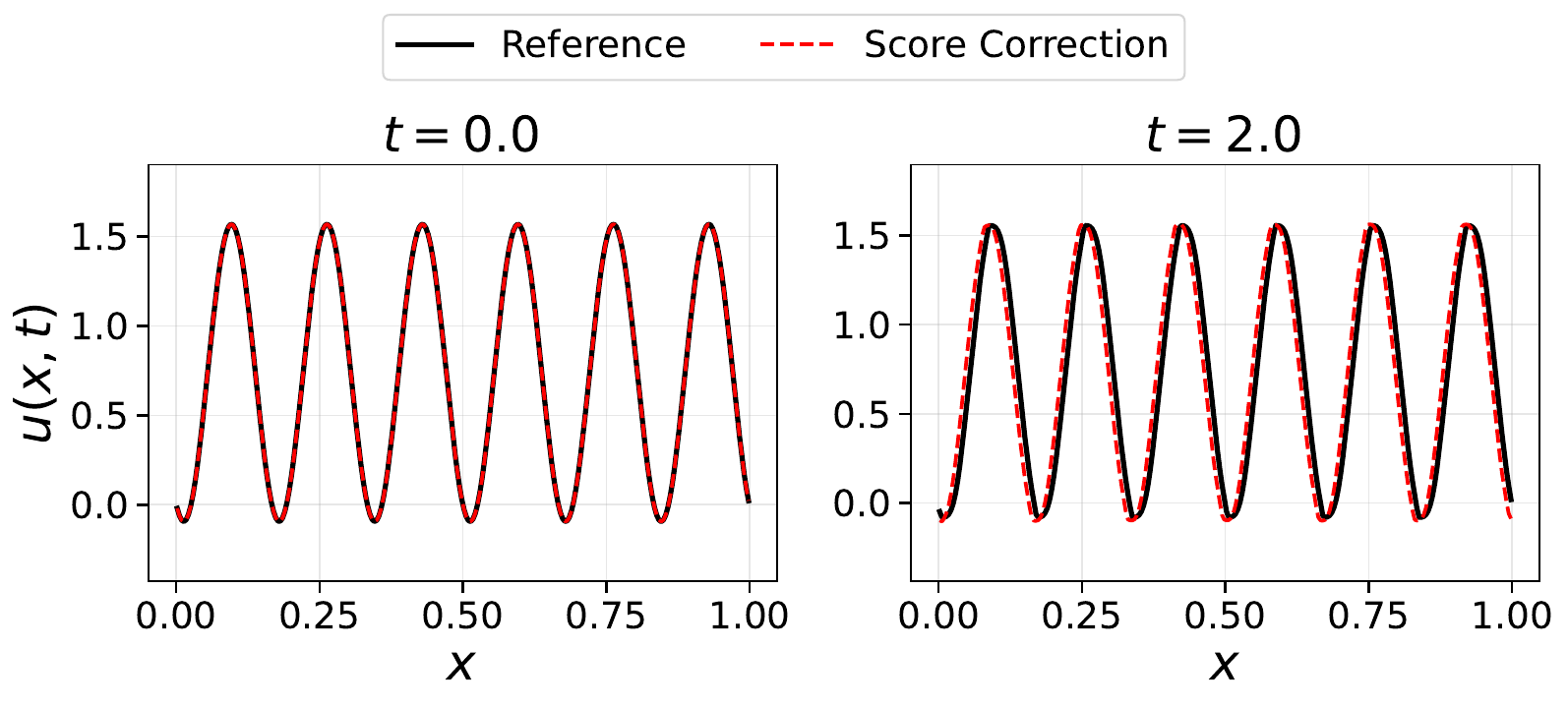}
    \caption{1D advection in a  supercritical regime ($\nu = 1.50$). The corrected scheme remains stable and tracks the
    analytical profile.}
    \label{fig:adv_paper_plot}
\end{figure}

\begin{table}[h]
    \centering
    \footnotesize
    \caption{1D advection errors across the PDEBench test ensemble. In the stable sub-critical regime, the score-corrected upwind scheme  reduces numerical errors. In the supercritical regime, the baseline upwind method diverges (\textit{div.}), whereas the score correction successfully maintains long-term structural stability.}
    \label{tab:adv_pdebench}
    \begin{tabular}{l | c c c}
        \hline
        method & $\mathrm{err}_{L^1}$
               & $\mathrm{err}_{L^2}$ 
               & $\mathrm{err}_{L^\infty}$ \\
        \hline
        \multicolumn{4}{c}{$\nu = 0.20$; $\Delta t = 7.81\!\times\!10^{-4}$} \\
        \hline
        Upwind                 & $2.45\!\times\!10^{-1}$ & $4.18\!\times\!10^{-1}$ & $4.98\!\times\!10^{-1}$ \\
        Score Correction       & $1.10\!\times\!10^{-1}$ & $2.08\!\times\!10^{-1}$ & $2.84\!\times\!10^{-1}$ \\
        \hline
        \multicolumn{4}{c}{$\nu = 0.50$; $\Delta t = 1.95\!\times\!10^{-3}$} \\
        \hline
        Upwind                 & $1.98\!\times\!10^{-1}$ & $3.42\!\times\!10^{-1}$ & $4.06\!\times\!10^{-1}$ \\
        Score Correction       & $7.11\!\times\!10^{-2}$ & $1.36\!\times\!10^{-1}$ & $1.95\!\times\!10^{-1}$ \\
        \hline
        \multicolumn{4}{c}{$\nu = 0.80$; $\Delta t = 3.13\!\times\!10^{-3}$} \\
        \hline
        Upwind                 & $1.13\!\times\!10^{-1}$ & $2.01\!\times\!10^{-1}$ & $2.44\!\times\!10^{-1}$ \\
        Score Correction       & $3.40\!\times\!10^{-2}$ & $7.21\!\times\!10^{-2}$ & $1.10\!\times\!10^{-1}$ \\
        \hline
        \multicolumn{4}{c}{$\nu = 1.10$; $\Delta t = 4.30\!\times\!10^{-3}$} \\
        \hline
        Upwind                 & \textit{div.} & \textit{div.}  & \textit{div.} \\
        Score Correction       & $5.15\!\times\!10^{-2}$ & $1.19\!\times\!10^{-1}$ & $1.75\!\times\!10^{-1}$ \\
        \hline
        \multicolumn{4}{c}{$\nu = 1.50$; $\Delta t = 5.86\!\times\!10^{-3}$} \\
        \hline
        Upwind                 & \textit{div.} & \textit{div.} & \textit{div.} \\
        Score Correction       & $1.31\!\times\!10^{-1}$ & $2.73\!\times\!10^{-1}$ & $3.77\!\times\!10^{-1}$ \\
        \hline
    \end{tabular}
\end{table}

\subsection{NLS Visualization}
We provide visual results demonstrating the efficacy of our score-based correction method applied to the nonlinear Schrödinger (NLS) equation. As illustrated in \Cref{fig:nls_marginal_paper}, the correction successfully eliminates the nonphysical grid-scale ripples that accumulate in the baseline numerical method, maintaining a clean soliton profile over the integration window.

\begin{figure}[h]
    \centering
    \includegraphics[width=.8\linewidth]{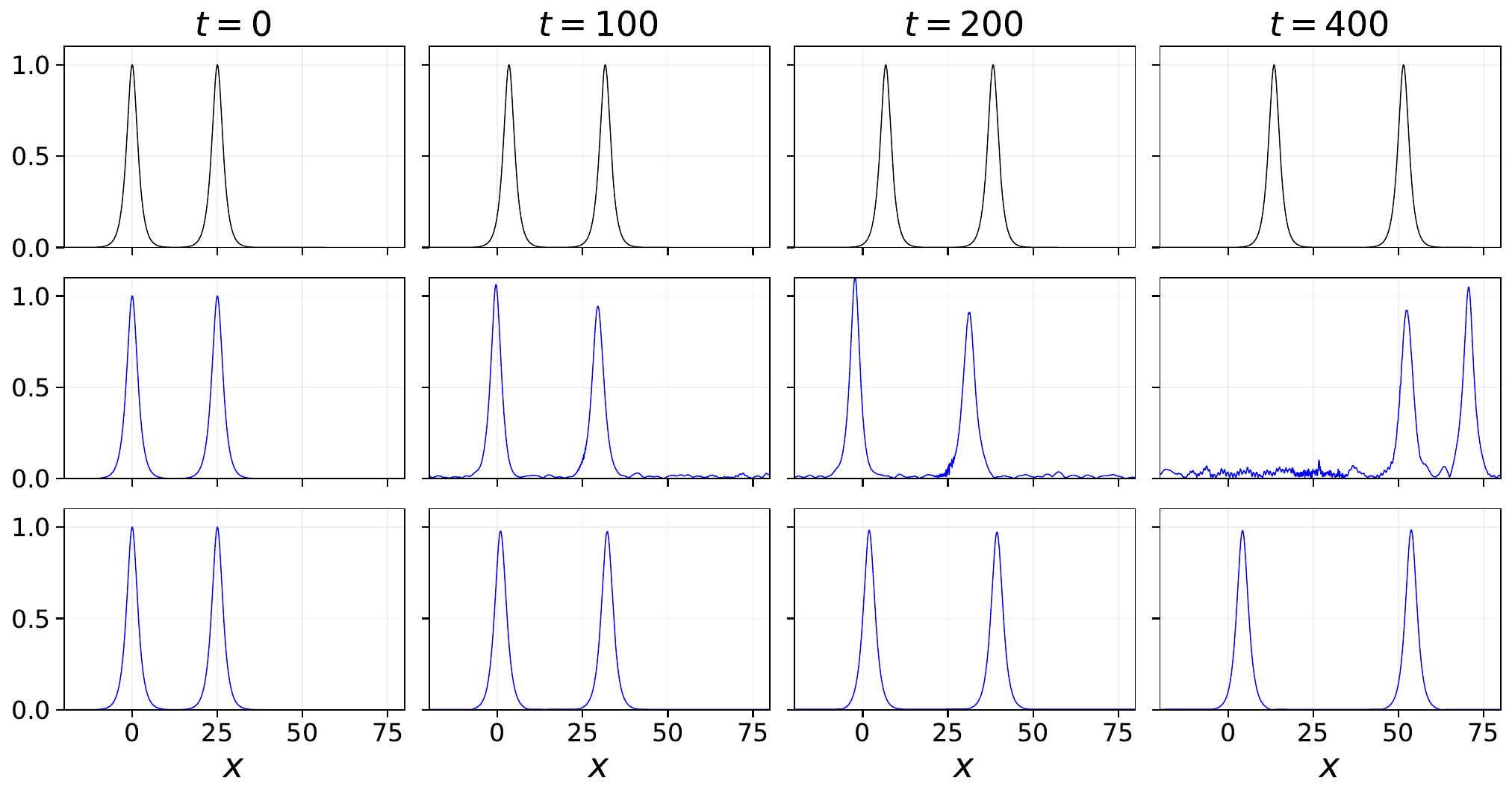}
\caption{NLS results for $\Delta t = 0.2,\,\Delta x = 0.1$. Each panel plots
    $|\psi(x)|$ at the labeled time. \emph{Top:} fine-grid reference;
    \emph{middle:} baseline numerical method; \emph{bottom:} score correction.
    The baseline develops grid-scale ripples that the score correction removes
    while preserving the soliton profile.}

    \label{fig:nls_marginal_paper}
\end{figure}

\subsection{Burgers Visualization}
We provide space-time visualizations of the Burgers' equation to further illustrate the stabilizing effects of our method. As shown in \Cref{fig:burgers_spacetime}, while the uncorrected forward-time centered-space (FTCS) baseline struggles to maintain stability and diverges, the score-corrected scheme tracks the underlying dynamics of the reference trajectory over the full integration window.

\begin{figure}[h]
    \centering
    \includegraphics[width=.8\linewidth]{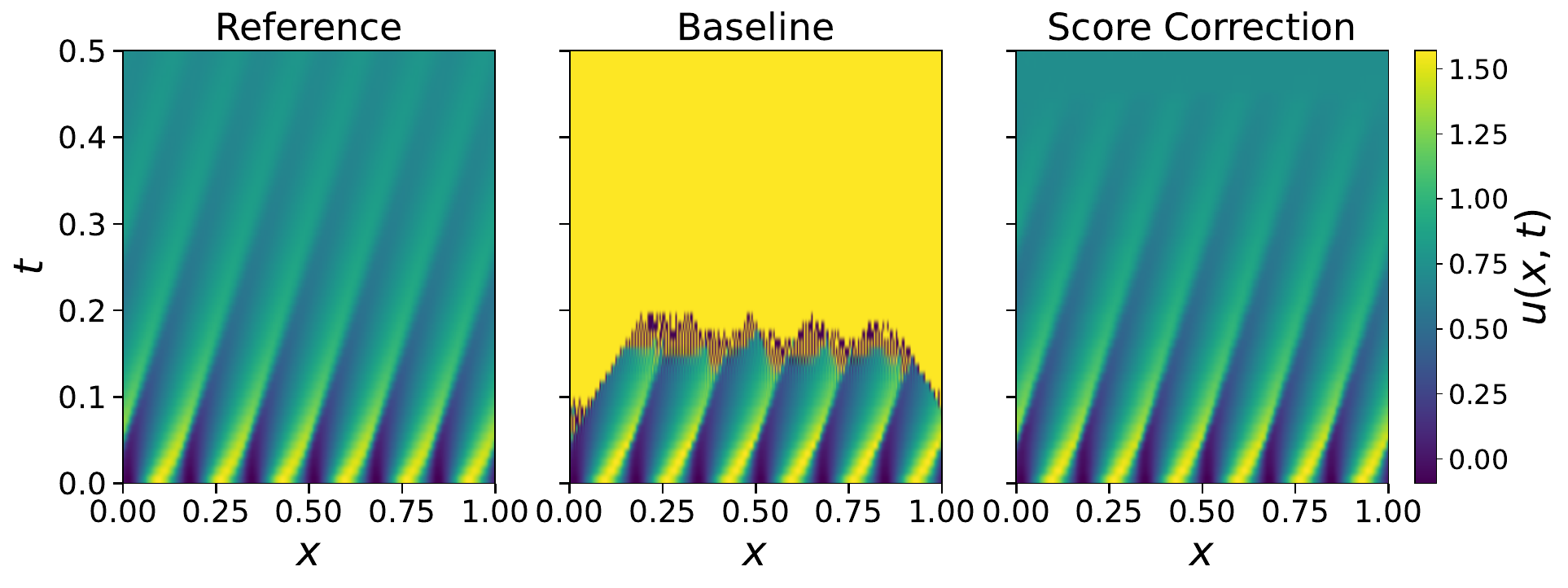}
    \caption{1D  Burgers results, for a
    representative PDEBench trajectory.
    \emph{Left:} PDEBench reference. \emph{Center:} baseline
    FTCS. \emph{Right:} the score-corrected
    scheme, which tracks the reference.}
    \label{fig:burgers_spacetime}
\end{figure}

\section{Classical Stabilization Baselines}
\label{app:classical}

This appendix describes the classical stabilization baselines used for comparison
with the proposed score-based correction method. Each baseline is implemented at
the same spatial resolution and timestep as the corresponding experiment.

\subsection{WENO-5 (Weighted Essentially Non-Oscillatory)}
\label{app:weno}

We implement the fifth-order finite-difference WENO scheme of
\citet{jiang1996efficient} with global Lax--Friedrichs flux splitting. For a
conservation law written in the form
\[
u_t + f(u)_x = (\text{additional diffusive/dispersive terms}),
\]
the convective flux is split as
\[
f^{\pm}(u)=\frac12\left(f(u)\pm a u\right),
\qquad
a=\max_x |f'(u)|.
\]
The positive flux component is reconstructed using a left-biased stencil,
while the negative component is obtained by symmetry. The interface flux is
given by
\[
F_{j+1/2}=\sum_{k=0}^{2}\omega_k\hat f^{(k)},
\]
where the candidate reconstructions are
\begin{align*}
\hat f^{(0)}
&=\frac16(2f_{j-2}-7f_{j-1}+11f_j),\\
\hat f^{(1)}
&=\frac16(-f_{j-1}+5f_j+2f_{j+1}),\\
\hat f^{(2)}
&=\frac16(2f_j+5f_{j+1}-f_{j+2}).
\end{align*}
The nonlinear weights are computed as
\[
\omega_k=\frac{\alpha_k}{\sum_l\alpha_l},
\qquad
\alpha_k=\frac{d_k}{(\varepsilon+\beta_k)^2},
\]
with linear weights
\[
(d_0,d_1,d_2)=
\left(\frac1{10},\frac6{10},\frac3{10}\right),
\qquad
\varepsilon=10^{-6}.
\]
The Jiang--Shu smoothness indicators are
\begin{align*}
\beta_0 &=
\frac{13}{12}(f_{j-2}-2f_{j-1}+f_j)^2
+\frac14(f_{j-2}-4f_{j-1}+3f_j)^2,\\
\beta_1 &=
\frac{13}{12}(f_{j-1}-2f_j+f_{j+1})^2
+\frac14(f_{j-1}-f_{j+1})^2,\\
\beta_2 &=
\frac{13}{12}(f_j-2f_{j+1}+f_{j+2})^2
+\frac14(3f_j-4f_{j+1}+f_{j+2})^2 .
\end{align*}
The resulting semi-discrete operator, including the remaining physical terms,
is advanced using the third-order strong-stability-preserving Runge--Kutta
scheme (SSP-RK3).

\paragraph{KdV.}
The convective flux is
\[
f(u)=-\beta u^2=\frac12u^2,
\]
with $\beta=-1/2$ and
$a=\max_x|u|$. The dispersive contribution is discretized using the centered
finite-difference approximation
\[
u_{xxx}(x_j)\approx
\frac{
u_{j+2}-2u_{j+1}+2u_{j-1}-u_{j-2}
}
{2\Delta x^3},
\]
and treated explicitly. The discretization uses $N=200$ grid points and
$\Delta t=5\times10^{-4}$.

\paragraph{Burgers.}
We use
\[
f(u)=\frac12u^2,
\]
with the viscous contribution discretized using the standard centered second
difference,
\[
u_{xx}(x_j)\approx
\frac{u_{j+1}-2u_j+u_{j-1}}{\Delta x^2}.
\]
The viscous term is treated explicitly and the experiments are performed on
$N=256$ grid points.

\subsection{TVD (Total Variation Diminishing), Superbee Limiter}
\label{app:tvd}

\paragraph{Burgers.}
We use a MUSCL reconstruction with a Superbee slope limiter and a local
Lax--Friedrichs numerical flux, advanced in time by the second-order
strong-stability-preserving Runge--Kutta scheme (SSP-RK2). Defining the
backward and forward differences
\[
a=u_j-u_{j-1},
\qquad
b=u_{j+1}-u_j,
\]
the limited slope is
\[
d_j=
\operatorname{maxmod}
\left(
\operatorname{minmod}(a,2b),
\operatorname{minmod}(2a,b)
\right),
\]
where $\operatorname{minmod}$ and $\operatorname{maxmod}$ select the argument
with smaller and larger magnitude, respectively, when all arguments have the
same sign, and return zero otherwise. The reconstructed interface states are
\[
u^L_j=u_j+\frac12d_j,
\qquad
u^R_j=u_{j+1}-\frac12d_{j+1},
\]
and the numerical flux is
\[
F_{j+1/2}
=
\frac12
\left(
f(u^L_j)+f(u^R_j)
\right)
-\frac12\,\alpha\,(u^R_j-u^L_j),
\]
where $\alpha=\max_x|u|$ is the (global) maximum wave speed. The viscous
contribution is discretized using the centered second difference and
incorporated explicitly.



\subsection{Spectral Filtering}
\label{app:spectral}

\paragraph{KdV.}
We use a pseudo-spectral discretization in space.
Spatial derivatives are evaluated in Fourier space, while the linear dispersive
operator is integrated using an integrating factor. Denoting the Fourier
transform of $u$ by $\hat u$, the evolution equation is written as
\[
\hat u_t
=
\beta(ik)\widehat{u^2}
+
\nu(ik)^3\hat u .
\]
The linear operator is therefore
\[
L(k)=-i\nu k^3,
\]
and we define the integrating factor
\[
E(k)=\exp(L(k)\Delta t).
\]
The nonlinear contribution is advanced using a fourth-order integrating-factor
Runge--Kutta method (IF-RK4). After each time step, a spectral low-pass filter
is applied by setting the highest-frequency modes to zero:
\[
\hat u(k)=0,
\qquad
|k|>0.9\,k_{\max}.
\]
The discretization uses $N=200$ grid points and
$\Delta t=5\times10^{-4}$.

\paragraph{NLS.}
The spectral filter is applied on top
of the split-step baseline. We use a smooth
exponential filter rather than a sharp truncation:
\[
\hat u(k)\leftarrow \hat u(k)\sigma(k),
\]
where
\[
\sigma(k)=
\exp\left(
-\alpha
\left(
\frac{|k|}{k_{\max}}
\right)^p
\right),
\]
with parameters
\[
\alpha=36,
\qquad
p=16.
\]
This filter preserves the low-frequency components while damping the highest
spectral modes.

\paragraph{Burgers.}
The spectral filter is applied as a stabilization step
to the FTCS baseline. After each FTCS update, the solution is transformed to
Fourier space, a two-thirds dealiasing filter is applied, and the solution is
transformed back to physical space. The FTCS instability in this regime
occupies a broad high-frequency band that extends below the filter's cutoff;
consequently, the filter delays divergence at the larger timesteps but only
stabilizes the integration at the smallest one.



\section{Ablation Study}
\label{app:ablation}

To quantify the contribution of each component of the proposed corrector and
to compare it with alternative learned correction strategies, we perform an
ablation study on the KdV equation. All configurations use the same
conservative leap-frog (Zabusky--Kruskal) discretization on an $N=200$ grid
with $\Delta t=5\times10^{-4}$, and are integrated until $t=10$. Accuracy is
measured using the relative pointwise error and the relative drift in the $L^2$ norm.

We first consider the uncorrected leap-frog scheme, which serves as the
baseline without any additional stabilization. We then evaluate an invariant
projection method, where each leap-frog update is followed by an exact
projection onto the mass, $L^2$-norm, and Hamiltonian invariant manifolds.
This projection approach does not include any learned correction. We also
evaluate the score correction alone, where the score update is performed
without invariant projection.

As alternative learned correction strategies, we consider a denoising
autoencoder (DAE) and a supervised residual correction model. The DAE
$f_\theta$ is trained to reconstruct clean states from Gaussian-corrupted
inputs by minimizing
\[
\min_\theta
\mathbb{E}\left[
\left\|f_\theta(u+\varepsilon)-u\right\|^2
\right],
\qquad
\varepsilon\sim\mathcal{N}(0,\sigma^2I).
\]
Unlike the score model, which estimates the gradient of the log-density of the
perturbed data distribution, the DAE directly predicts a point estimate of the
clean state. During inference, the provisional leap-frog update
$\hat u_{n+1}$ is replaced by
\[
u_{n+1}=f_\theta(\hat u_{n+1}).
\]

The residual correction model $g_\theta$ is trained using supervised
regression to predict the one-step discrepancy between a coarse leap-frog
update and the corresponding fine-grid solution. Given the current state and
the provisional update, the correction is applied as
\[
u_{n+1}
=
\hat u_{n+1}+g_\theta(\hat u_{n+1},u_n).
\]
Unlike the score-based correction, this approach performs deterministic
residual regression and does not explicitly incorporate a learned
representation of the admissible solution distribution.

Finally, we evaluate the complete proposed method, which combines the score
correction with invariant enforcement.

The results are summarized in \Cref{tab:ablation_study}. The uncorrected
leap-frog scheme becomes unstable before the final integration time, and the
learned residual correction exhibits a similar failure due to the accumulation
of autoregressive errors outside the training distribution.

The remaining methods remain stable, but exhibit different trade-offs between
accuracy and invariant preservation. Invariant projection enforces the
conservation laws but does not sufficiently reduce trajectory error,
whereas the score correction substantially improves accuracy while introducing
a small invariant drift. The DAE correction performs less effectively,
indicating that direct state reconstruction is less suitable for long-time
stabilization than score-based correction.

Combining score correction with invariant projection achieves the best overall
performance, preserving the invariants while attaining a low
pointwise error. This demonstrates the complementary roles of the two
components: the score correction provides an accurate learned stabilization
mechanism, and the projection step enforces the physical constraints of the
dynamics.

\begin{table}[h]
\centering
\caption{Ablation study on the KdV equation. Relative norm drift
($\mathrm{err}_{I_2}$) and relative pointwise error
($\mathrm{err}_{\mathrm{pt}}$) at $t=10$ with $\Delta t=5\times10^{-4}$ for
different corrector configurations. \emph{div.} denotes divergence before the
final integration time.}
\label{tab:ablation_study}
\begin{tabular}{l c c}
\toprule
\textbf{Configuration} & $\mathrm{err}_{I_2}$ & $\mathrm{err}_{\mathrm{pt}}$ \\
\midrule
Leap-frog baseline & \textit{div.} & \textit{div.} \\
Invariant projection & $0$ & $1.41$ \\
Score step only & $1.6\times10^{-3}$ & $0.71$ \\
Denoising autoencoder & $1.2\times10^{-1}$ & $1.35$ \\
Learned residual correction & \textit{div.} & \textit{div.} \\
Full score correction & $0$ & $0.70$ \\
\bottomrule
\end{tabular}
\end{table}

\section{KdV Timestep Refinement}
\label{app:kdv_refine}

We examine whether the instability of the KdV leap-frog baseline
scheme can be mitigated through timestep refinement. \Cref{tab:kdv_refine}
reports the blow-up time as $\Delta t$ is progressively reduced. Although
smaller timesteps delay the onset of instability, the scheme remains unstable
and does not reach the target integration time $t=10$. Even after a $50\times$
reduction in timestep, the blow-up time increases only from approximately $t=2.34$ to
$t=3.37$. These results indicate that timestep refinement alone does not
provide a practical stabilization mechanism for this discretization, and thus enphisize the advantege of using the score correction scheme.

\begin{table}[h]
\centering
\caption{Stability of the KdV leap-frog baseline for various timesteps sizes. Smaller timesteps delay instability but do not prevent divergence.}
\label{tab:kdv_refine}
\begin{tabular}{lcccccc}
\toprule
$\Delta t$
& $5{\times}10^{-4}$
& $2{\times}10^{-4}$
& $1{\times}10^{-4}$
& $5{\times}10^{-5}$
& $2{\times}10^{-5}$
& $1{\times}10^{-5}$ \\
\midrule
Steps to $t=10$
& 20k
& 50k
& 100k
& 200k
& 500k
& 1M \\
Blow-up time
& 2.34
& 2.38
& 2.75
& 2.97
& 3.10
& 3.37 \\
\bottomrule
\end{tabular}
\end{table}

\section{Execution Time}
\label{app:timing}

To assess the computational overhead of the proposed method, we compare the per-step execution time of the score-based stabilization against the classical WENO-5 \citep{jiang1996efficient} baseline. All timing evaluations were performed on standard hardware (NVIDIA GeForce RTX 4090 GPUs) using a batch size of 128 trajectories to leverage parallel processing.

The WENO-5 scheme is computationally demanding per integration step. It requires multiple candidate stencil reconstructions and the evaluation of nonlinear smoothness indicators. Furthermore, because it is typically advanced in time using a multi-stage Runge-Kutta method (such as SSP-RK3), the heavy spatial operator must be evaluated multiple times for a single step forward in time. 

In contrast, the score-corrected scheme is highly parallelizable and consists of an inexpensive provisional update followed by a single forward pass through the learned U-Net stabilization operator. As shown in \Cref{tab:timing}, the score-based method executes faster per step than the WENO-5 scheme.

\begin{table}[htbp]
\centering
\caption{Per-step execution time, reported as the average time per trajectory per integration step for the two stabilization approaches. The score correction includes the cost of the baseline time-stepping update and the learned correction network evaluation.}
\label{tab:timing}
\begin{tabular}{l c c}
\toprule
Equation & WENO-5 (ms) & Score Correction (ms) \\
\midrule
KdV & 15.05 & 9.01 \\
Burgers & 14.78 & 12.64 \\
\bottomrule
\end{tabular}
\end{table}

\section{Model Generalization}
\label{app:generalization}

To evaluate the transferability of the learned stabilization mechanism, we test
the trained score model on initial conditions that are not included during
training. In all experiments, the trained model is kept fixed and inference is
performed without retraining or fine-tuning. The baseline integrator
$G_{\Delta t}$, timestep, spatial discretization, and invariant projection
parameters (when applicable) remain unchanged. For each unseen initial condition, an independent high-accuracy
spectral reference solution is generated for evaluation.

\subsection{Korteweg--de Vries}
\label{app:gen_kdv}

The score model is trained using the initial condition $u(x,0)=\cos(\pi x)$.
We evaluate
the resulting method on two initial conditions outside the training
distribution.



\paragraph{Higher-frequency cosine.}
We consider the unseen initial condition
\begin{equation}
    u(x,0)=\cos(2\pi x).
\end{equation}
This is a higher-frequency cosine mode compared with the training initial
condition. The uncorrected leap-frog scheme becomes unstable at approximately
$t=2.19$, whereas the score-corrected method remains stable over the full
integration interval. As shown in
\Cref{fig:kdv_gen_cos2pi}, the corrected solution reproduces the qualitative
dispersive dynamics of the reference solution throughout the evolution.

\paragraph{Two-soliton state.}
We next consider a superposition of two solitons with different amplitudes,
\begin{equation}
    u(x,0)=
    2\,\mathrm{sech}^2\!\Big(\frac{x-x_1}{w(2)}\Big)
    +
    \mathrm{sech}^2\!\Big(\frac{x-x_2}{w(1)}\Big),
    \qquad
    w(A)=2\sqrt{\frac{3\delta^2}{A}},
\end{equation}
with centers $x_1=0.5$ and $x_2=1.3$. The score-corrected method remains
stable and captures the soliton interaction dynamics. The corresponding
evolution is shown in \Cref{fig:kdv_gen_two_soliton}, where
the corrected solution follows the reference soliton collision dynamics.


\begin{figure}[t]
    \centering
    \begin{subfigure}{0.49\linewidth}
        \centering
        \includegraphics[width=\linewidth]{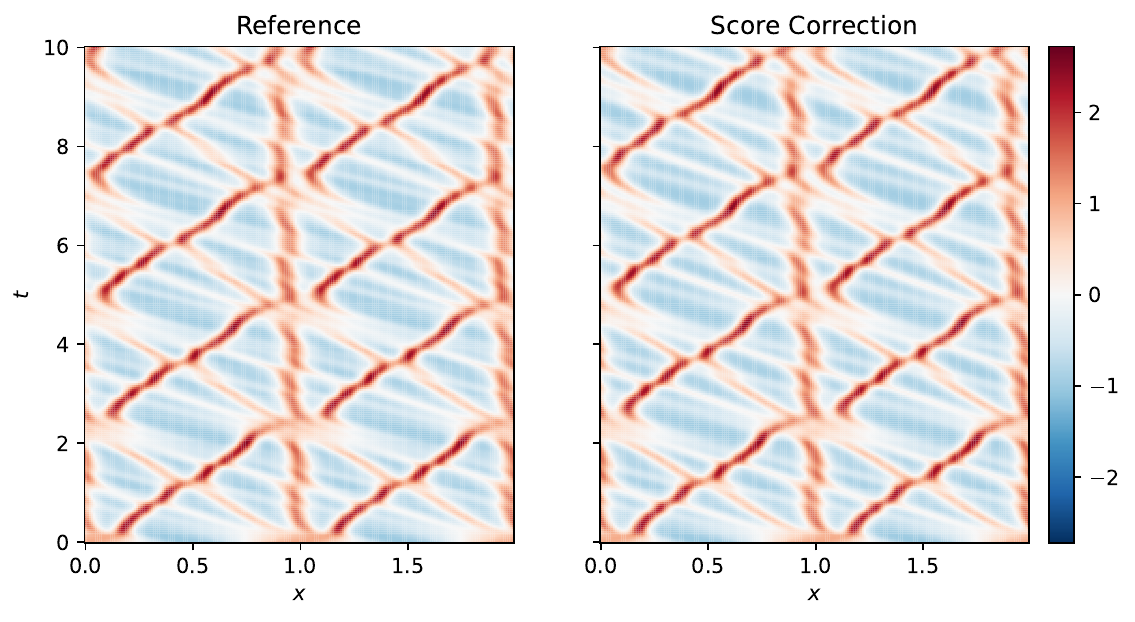}
        \caption{Higher-frequency cosine.}
        \label{fig:kdv_gen_cos2pi}
    \end{subfigure}
    \hfill
    \begin{subfigure}{0.49\linewidth}
        \centering
        \includegraphics[width=\linewidth]{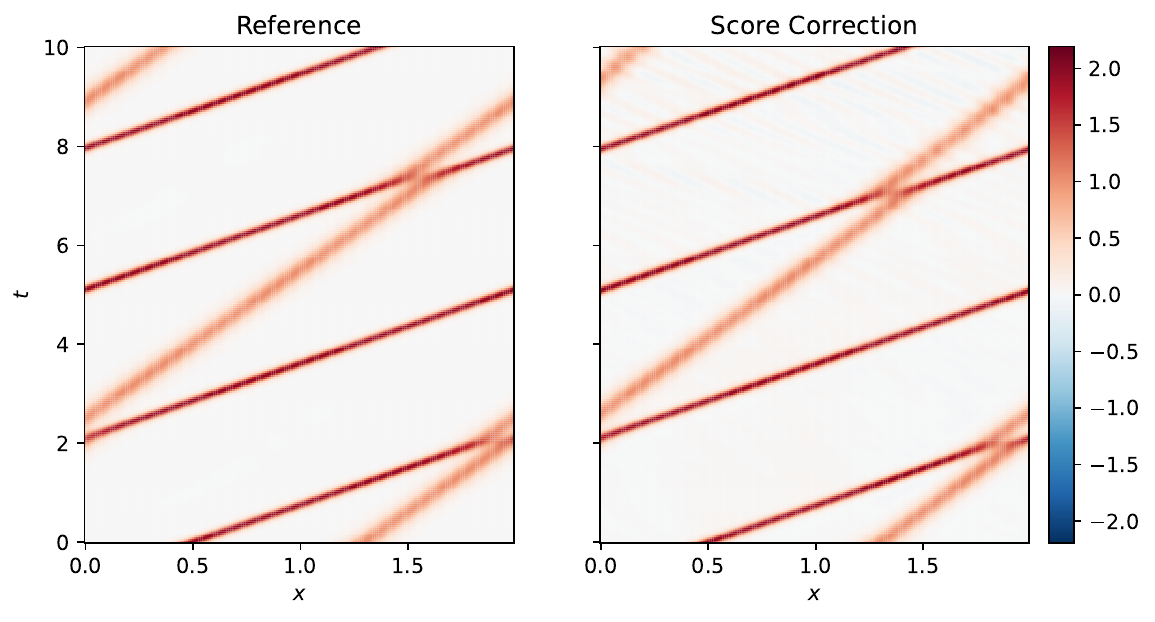}
        \caption{Two-soliton.}
        \label{fig:kdv_gen_two_soliton}
    \end{subfigure}
    \caption{Generalization of the KdV score correction to unseen initial
    conditions. \textbf{Left:} $u(x,0)=\cos(2\pi x)$.
    \textbf{Right:} two-soliton initial condition. In both cases, the
    corrected evolution preserves the relevant dynamics.}
    \label{fig:kdv_generalization}
\end{figure}

\subsection{Burgers' Equation}
\label{app:gen_burgers}

The Burgers score model is trained on the PDEBench initial-condition
distribution, where each sample is generated as a superposition of sinusoidal
modes,
\begin{equation}
    u(x,0) = s\, W(x) \sum_{n=1}^{8} a_n \sin\!\big(k_n x + \phi_n\big),
    \qquad k_n = 2\pi n .
\end{equation}
The active wavenumbers are sampled uniformly from $\{1,\dots,8\}$, with
amplitudes $a_n\sim\mathcal{U}(0,1)$ and phases
$\phi_n\sim\mathcal{U}(0,2\pi)$. The distribution additionally includes a
random constant offset, random sign $s\in\{-1,+1\}$, and, with low probability,
a pointwise absolute value and smooth localization window $W(x)$. To evaluate
generalization beyond the training distribution, we consider two qualitatively
different initial conditions.

\paragraph{Localized Gaussian pulse.}
We first evaluate the model on a smooth localized profile,
\begin{equation}
    u(x,0) = \exp\!\Big(-\frac{(x-\tfrac12)^2}{2\sigma^2}\Big),
    \qquad \sigma = 0.08 .
\end{equation}
Where the uncorrected FTCS discretization becomes
unstable at approximately $t=0.11$, while the score-corrected scheme remains
stable throughout the integration interval. The corrected trajectory captures
the nonlinear steepening and propagation of the pulse, closely matching the
reference solution as shown in \Cref{fig:burgers_gen_gauss}.

\paragraph{High-frequency mode.}
We further test an oscillatory initial condition,
\begin{equation}
    u(x,0)=\sin(4\pi x).
\end{equation}
Despite the higher-frequency content, the score correction prevents the instability observed
in the FTCS solution. The corrected evolution remains stable and
reproduces the reference shock formation, as presented in \Cref{fig:burgers_gen_sin4pi}.

\begin{figure}[h]
    \centering
    \begin{subfigure}{0.49\linewidth}
        \centering
        \includegraphics[width=\linewidth]{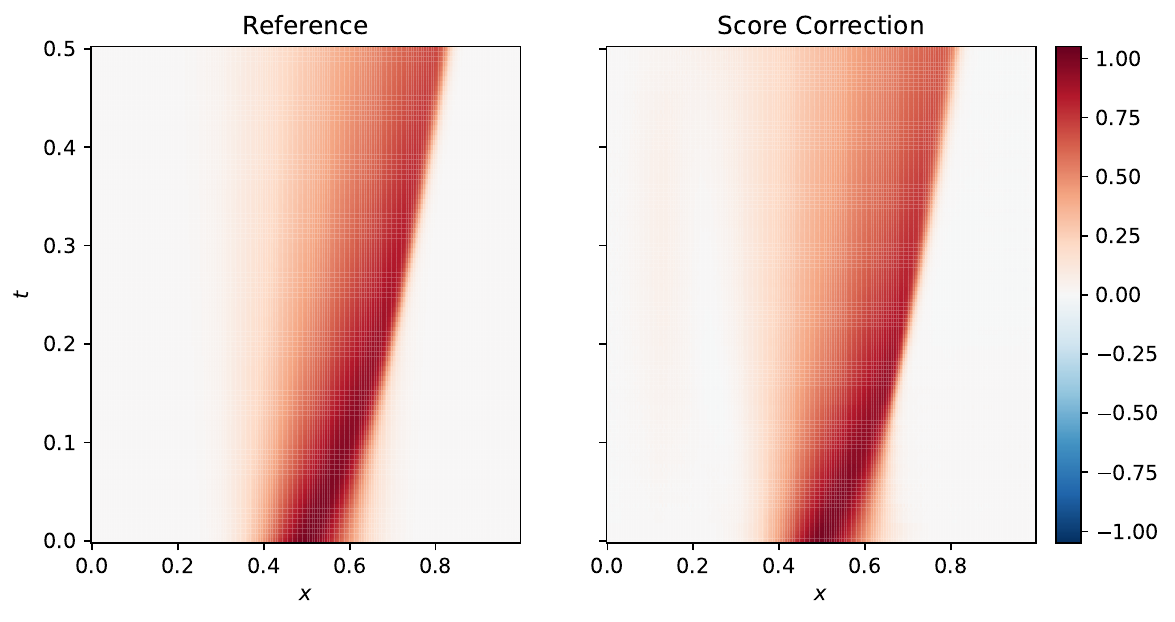}
        \caption{Localized Gaussian pulse.}
        \label{fig:burgers_gen_gauss}
    \end{subfigure}
    \hfill
    \begin{subfigure}{0.49\linewidth}
        \centering
        \includegraphics[width=\linewidth]{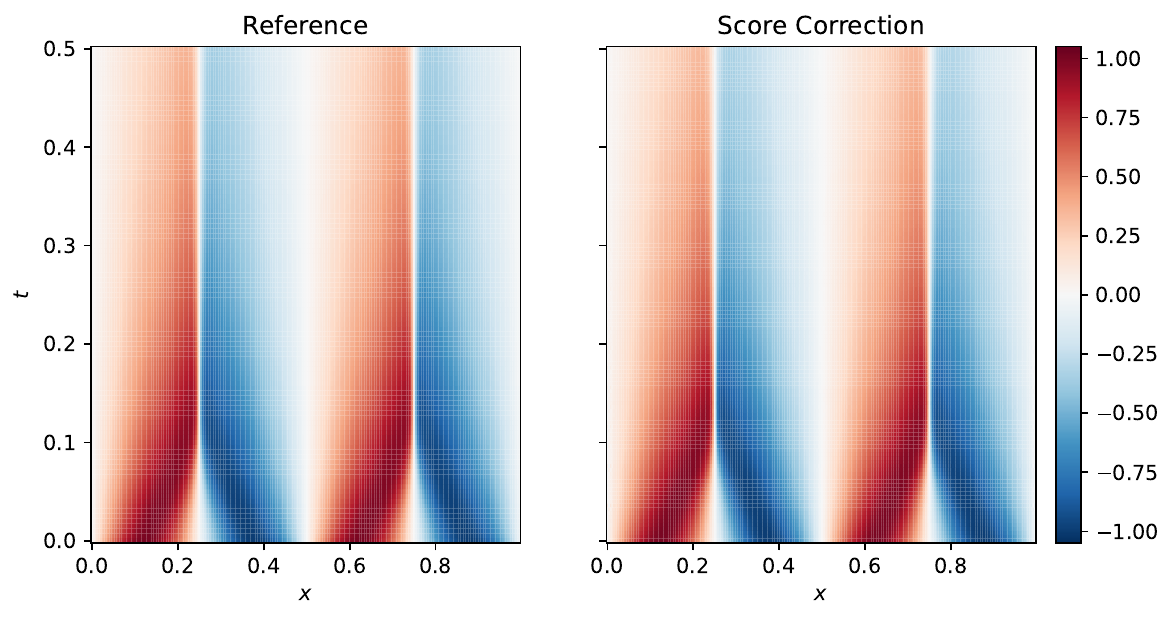}
        \caption{Higher-frequency mode.}
        \label{fig:burgers_gen_sin4pi}
    \end{subfigure}
\caption{Generalization of the Burgers score correction to unseen initial
conditions. \textbf{Left:} localized Gaussian pulse.
\textbf{Right:} higher-frequency mode $u(x,0)=\sin(4\pi x)$. In both cases, the
corrected evolution remains stable and preserves the reference dynamics.}
    \label{fig:burgers_generalization}
\end{figure}

\section{Experimental Details}
\label{app:experimental-details}

We provide the experimental details of the results presented in this paper in \Cref{tab:exp-details-advection} and \Cref{tab:exp-details-others}. All experiments were conducted using NVIDIA GeForce RTX 4090 GPUs. 
For hyperparameter selection,
the noise level $\sigma$, the step size $\eta$, and
the correction interval $K$ were chosen by grid search over
$\sigma\in\{0.02,0.05,0.1,0.2,0.3,0.5,0.8\}$,
$\eta\in\{10^{-5},10^{-4},3{\times}10^{-4},10^{-3},3{\times}10^{-3},10^{-2},3{\times}10^{-2}\}$,
and $K\in\{1,5,10,20\}$. The configuration minimizing the relative $L^2$ error on the
held-out trajectories.
All stochastic components use fixed seeds. Network initialization, the
score noise, and minibatch shuffling are seeded at the start
of training. The PDEBench samples
are generated with a fixed key.
For the PDEBench advection and Burgers experiments, all reported errors are averaged over a held-out ensemble of
$100$ unseen trajectories. The KdV and NLS errors are computed against high-accuracy spectral
reference solutions. We report ensemble-mean errors.

\begin{table*}[h]
\centering
\setlength{\tabcolsep}{4pt}
\renewcommand{\arraystretch}{1.15}
\caption{Experimental details for Advection.}
\label{tab:exp-details-advection}
\begin{tabular}{l cccc}
\toprule
& Gaussian & Box & Two-Box & PDEBench \\
\midrule
\multicolumn{5}{l}{\textit{Problem setup}}\\
Domain (periodic)    & $[0,1]$ & $[0,1]$ & $[0,1]$ & $[0,1]$ \\
Grid $N$ ($\Delta x$) & $256\,(1/256)$ & $256\,(1/256)$ & $256\,(1/256)$ & $256\,(1/256)$ \\
Base scheme          & Upwind & Upwind & Upwind & Upwind \\
Timestep $\Delta t$  & $1.95{\times}10^{-3}$ & $1.95{\times}10^{-3}$ & $1.95{\times}10^{-3}$ & $5{\times}10^{-3}$ \\
Reference resolution     & $2048$ & $2048$ & $2048$ & $1024$ \\
\midrule
\multicolumn{5}{l}{\textit{Score network}}\\
Architecture         & \multicolumn{4}{c}{U-Net \citep{ronneberger2015u}} \\
Levels               & $3$ & $3$ & $3$ & $5$ \\
Channels             & $(16,32,64)$ & $(32,64,128)$ & $(32,64,128)$ & $(32,64,128,256,512)$ \\
Parameters           & $0.18$M & $0.62$M & $0.62$M & $2.91$M \\
\midrule
\multicolumn{5}{l}{\textit{Training}}\\
Optimizer            & \multicolumn{4}{c}{Adam \citep{kingma2014adam}} \\
Learning rate        & $2{\times}10^{-4}$ & $2{\times}10^{-4}$ & $2{\times}10^{-4}$ & $2{\times}10^{-4}$ \\
LR schedule          & Cosine & Cosine & Cosine & Cosine \\
Batch size           & $512$ & $512$ & $512$ & $512$ \\
Epochs               & $500$ & $500$ & $500$ & $500$ \\
Training-data resolution & $2048$ & $2048$ & $2048$ & $1024$ \\
Training-data $\Delta t$ & Exact & Exact & Exact & Exact \\
\midrule
\multicolumn{5}{l}{\textit{Correction (inference)}}\\
$\sigma$             & $0.1$ & $0.05$ & $0.2$ & $0.05$ \\
Step $\eta$        & $0.003$ & $0.03$ & $0.05$ & $0.003$ \\
Interval $K$         & $5$ & $10$ & $20$ & $10$ \\
Enforced invariants  & None & Clamp $[0,1]$ & Clamp $+$ Mass & Mass $+\,L^2$ \\
\midrule
\multicolumn{5}{l}{\textit{Initial conditions (training)}}\\
Training samples & $10^{4}$ & $3{\times}10^{4}$ & $3{\times}10^{4}$ & $9.9{\times}10^{3}$ \\
Amplitude        & $1$ & $1$ & $\pm1$ & --- \\
Width            & $0.05$ & $[0.2,0.3]$ & $[0.1,0.2]$ & --- \\
Random shift     & $[0,1]$ & $[0,1]$ & $[0,1]$ & --- \\
\bottomrule
\end{tabular}
\end{table*}

\begin{table*}[h]
\centering
\setlength{\tabcolsep}{4pt}
\renewcommand{\arraystretch}{1.15}
\caption{Experimental details for KdV, NLS, and Burgers equations.}
\label{tab:exp-details-others}
\begin{tabular}{l ccc}
\toprule
& \textbf{KdV} & \textbf{NLS} & \textbf{Burgers} \\
\midrule
\multicolumn{4}{l}{\textit{Problem setup}}\\
Domain (periodic)    & $[0,2]$ & $[-20,80]$ & $[0,1]$ \\
Grid $N$ ($\Delta x$) & $200\,(0.01)$ & $1000\,(0.1)$ & $256\,(1/256)$ \\
Base scheme          & ZK leap-frog & Splitting & FTCS \\
Timestep $\Delta t$  & $5{\times}10^{-4}$ & $0.5$ & $4{\times}10^{-3}$ \\
Reference resolution     & $1000$ & $2{\times}10^{4}$ & $1024$ \\
Reference $\Delta t$     & $2{\times}10^{-6}$ & $10^{-4}$ & $0.01$ \\

\midrule
\multicolumn{4}{l}{\textit{Score network}}\\
Architecture         & \multicolumn{3}{c}{U-Net \citep{ronneberger2015u}} \\
Levels               & $3$ & $3$ & $4$ \\
Channels             & $(64,128,256)$ & $(64,128,256)$ & $(32,64,128,256)$ \\
Parameters           & $1.03$M & $1.03$M & $2.91$M \\
\midrule
\multicolumn{4}{l}{\textit{Training}}\\
Optimizer            & \multicolumn{3}{c}{Adam \citep{kingma2014adam}} \\
Learning rate        & $2{\times}10^{-4}$ & $2{\times}10^{-4}$ & $2{\times}10^{-4}$ \\
LR schedule          & Constant & Constant & Cosine \\
Batch size           & $256$ & $64$ & $512$ \\
Epochs               & $200$ & $1000$ & $300$ \\
Training-data resolution & $400$ & $10^{4}$ & $1024$ \\
Training-data $\Delta t$ & $5{\times}10^{-5}$ & $10^{-3}$ & $0.01$ \\

\midrule
\multicolumn{4}{l}{\textit{Correction (inference)}}\\
$\sigma$             & $0.05$ & $0.05$ & $0.1$ \\
Step $\eta$        & $3{\times}10^{-4}$ & $0.001$ & $0.01$ \\
Interval $K$         & $1$ & $1$ & $5$ \\
Enforced invariants  & Mass, $L^2$, $H$ & None & None \\

\midrule
\multicolumn{4}{l}{\textit{Initial conditions (training)}}\\
Training samples & $3.0{\times}10^{4}$ & $2.5{\times}10^{3}$ & $2.0{\times}10^{6}$ \\
Amplitude        & $[0.85,1.05]$ & $1$ & $[-3,3]$ \\
Width            & --- & $\sqrt{2}$ & --- \\
Random shift     & $[0,2]$ & $[-20,80]$ & --- \\

\bottomrule
\end{tabular}
\end{table*}

\end{document}